\newcommand*\Let[2]{\State #1 $\gets$ #2}
\algrenewcommand\algorithmicindent{1.0em}%
\definecolor{darkblue}{rgb}{0, 0, 0.5}
\newcolumntype{d}{D{.}{.}{-1}}
\useunder{\uline}{\ul}{}
\definecolor{plotblue}{rgb}{0.16, 0.46, 0.69}
\definecolor{plotorange}{rgb}{0.988, 0.502, 0.1765}
\title{Consistent Accelerated Inference via Confident Adaptive Transformers}
\author{Tal Schuster$^*$ \quad Adam Fisch$^*$ \quad Tommi Jaakkola \quad Regina Barzilay \\
  Computer Science and Artificial Intelligence Laboratory\\
  Massachusetts Institute of Technology \\
  {\tt \{tals,fisch,tommi,regina\}@csail.mit.edu} \\}
\newtheorem{theorem}{Theorem}
\numberwithin{theorem}{section}
\newtheorem{lemma}[theorem]{Lemma}
\newtheorem{proposition}[theorem]{Proposition}
\theoremstyle{definition}
\newtheorem{remark}[theorem]{Remark}
\newcommand{\cset}{\mathcal{C}_{\epsilon}}
\newcommand{\newpar}[1]{\paragraph{#1.}}
\titlespacing*{\subsection}{0pt}{.35\baselineskip}{.25\baselineskip}
\titlespacing*{\subsubsection}{0pt}{.35\baselineskip}{.25\baselineskip}
\titlespacing*{\section}{0pt}{.35\baselineskip}{.25\baselineskip}
\definecolor{darkgreen}{rgb}{0.31, 0.47, 0.26}
\definecolor{Gray}{gray}{0.95}
\renewcommand{\paragraph}{%
  \@startsection{paragraph}{4}%
  {\z@}{0.75ex \@plus 1ex \@minus .2ex}{-1em}%
  {\normalfont\normalsize\bfseries}%
}
\begin{document}
\maketitle
% TODO: uncomment
\renewcommand{\thefootnote}{\fnsymbol{footnote}}
\footnotetext[1]{The first two authors contributed equally.}
\renewcommand{\thefootnote}{\arabic{footnote}}

\begin{abstract}
We develop a novel approach for confidently accelerating inference in the large and expensive multilayer Transformers that are now ubiquitous in natural language processing (NLP). Amortized or approximate computational methods increase efficiency, but can come with unpredictable performance costs. In this work, we present CATs---\textbf{C}onfident \textbf{A}daptive \textbf{T}ransformers---in which we simultaneously increase computational efficiency, while \emph{guaranteeing} a specifiable degree of consistency with the original model with high confidence. Our method trains additional prediction heads on top of intermediate layers, and dynamically decides when to stop allocating computational effort to each input using a meta consistency classifier. To calibrate our early prediction stopping rule, we formulate  a unique extension of  conformal prediction. We demonstrate the effectiveness of this approach on four classification and regression tasks.\footnote{\faExternalLink \  \url{https://github.com/TalSchuster/CATs}}

% We present a novel dynamic Transformer for accelerated inference with computational adaptive confidence-based capacity. Recent introduced large Transformers have shown impressive performance by using many layers, but are costly and slow to run. Instead, our model can dynamically decide if to utilize the full computational power of the Transformer or to only use a subset of the layers for ``easier'' inputs. Importantly, our model can confidently decide on the computational effort to allocate for each input while \emph{preserving consistency} with the full model, up to an arbitrary tolerance level. Our solution is based on two components. First, we design a new meta predictor to assign confidence to ``early'' predictions. Then, we extend conformal prediction methods to achieve well-calibrated confidence measures that guarantee consistency. We demonstrate the effectiveness of this approach on four classification and regression tasks.
\end{abstract}

\section{Introduction}
\label{sec:introduction}

Large pre-trained language models have become the de facto standard approach for solving natural language processing tasks~\citep{devlin-etal-2019-bert, liu2019roberta}.
Despite their impressive performance, however, their often massive computational burden  makes them costly to run~\citep{schwartz2019green, sharir2020cost}.
%The recently introduced GPT-3 model~\cite{brown2020language}, for example, consists of 175B parameters across 96 Transformer layers, and requires specialized hardware to run even a single forward pass. As a consequence, these models can be challenging to use in real-world applications~\citep{schwartz2019green, sharir2020cost}. 
Concerns about their efficiency have  kindled a large body of research in the field~\cite{sanh2020distilbert, schwartz-etal-2020-right, Fan2020Reducing}. 
For multilayered architectures such as the Transformer, a popular approach is \emph{adaptive early exiting}~\cite[][\emph{inter alia}]{schwartz-etal-2020-right, xin-etal-2020-early}. Early exiting takes advantage of the observation that task instances vary in complexity. In this setting,  ``early'' classifiers are added on top of the simpler features of intermediate layers in the base model, and can trigger a prediction before the full model is executed. Naively deciding when to preempt computation, however, can result in unpredictable decreases in model accuracy.
%In response, an increasing amount of effort in the field has been focused on increasing the efficiency of large language models, using methods ranging from distillation to layer pruning~\cite[][]{sanh2020distilbert, schwartz-etal-2020-right, Fan2020Reducing}. Unfortunately, increases in efficiency inherently result in unpredictable decreases in model accuracy.

\begin{figure}[!t]
    \centering
    \includegraphics[width=.74\linewidth]{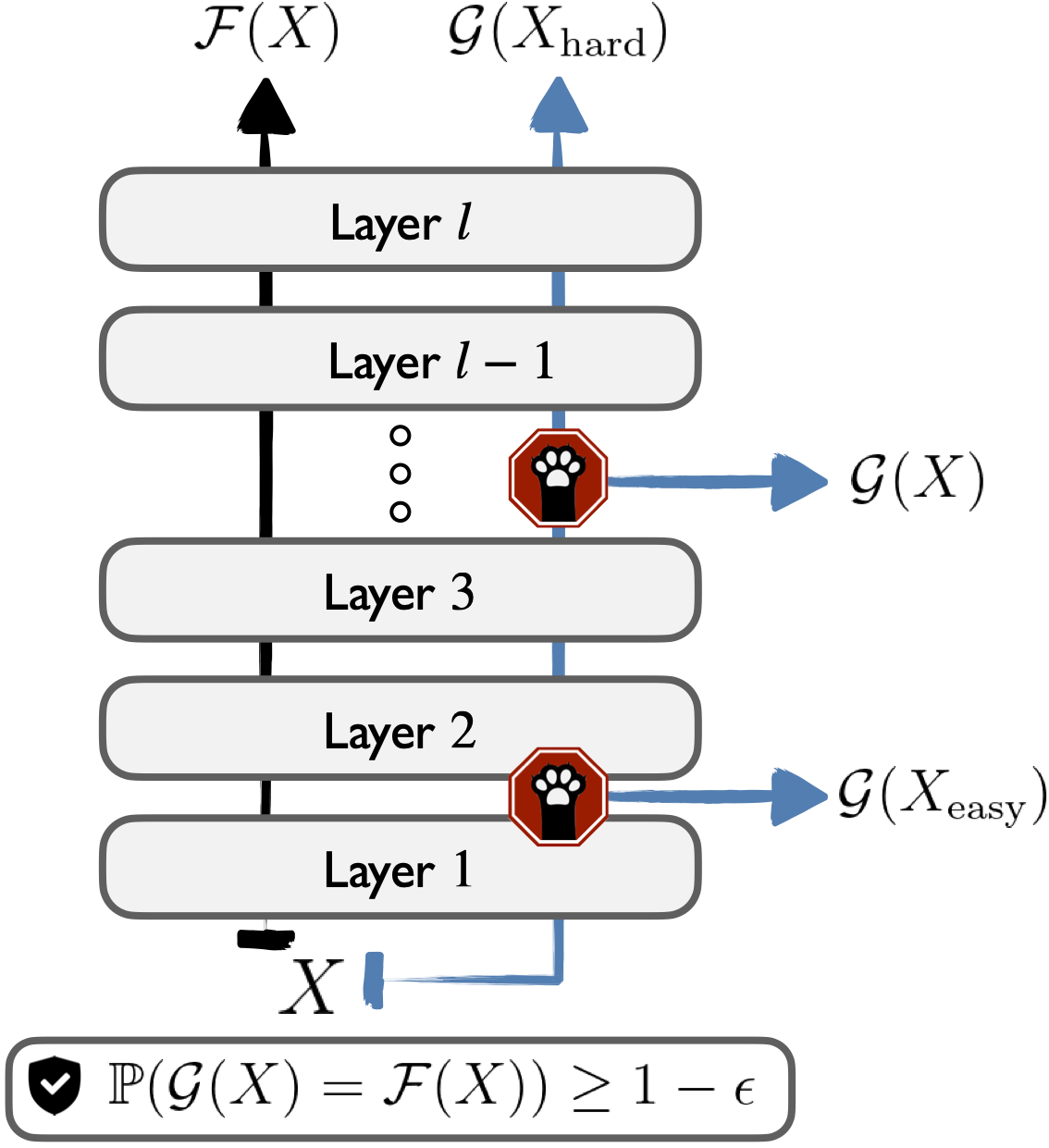}
    \caption{Our CAT model $\mathcal{G}$ can save computational resources by exiting early on certain inputs---while guaranteeing predictive consistency with the full model $\mathcal{F}$.}
    \vspace{-10pt}
    \label{fig:inference}
\end{figure}

\begin{figure*}[!t]
    \begin{tcolorbox}[colback=white, boxrule=1pt]
    \begin{center}
    \includegraphics[width=1\linewidth]{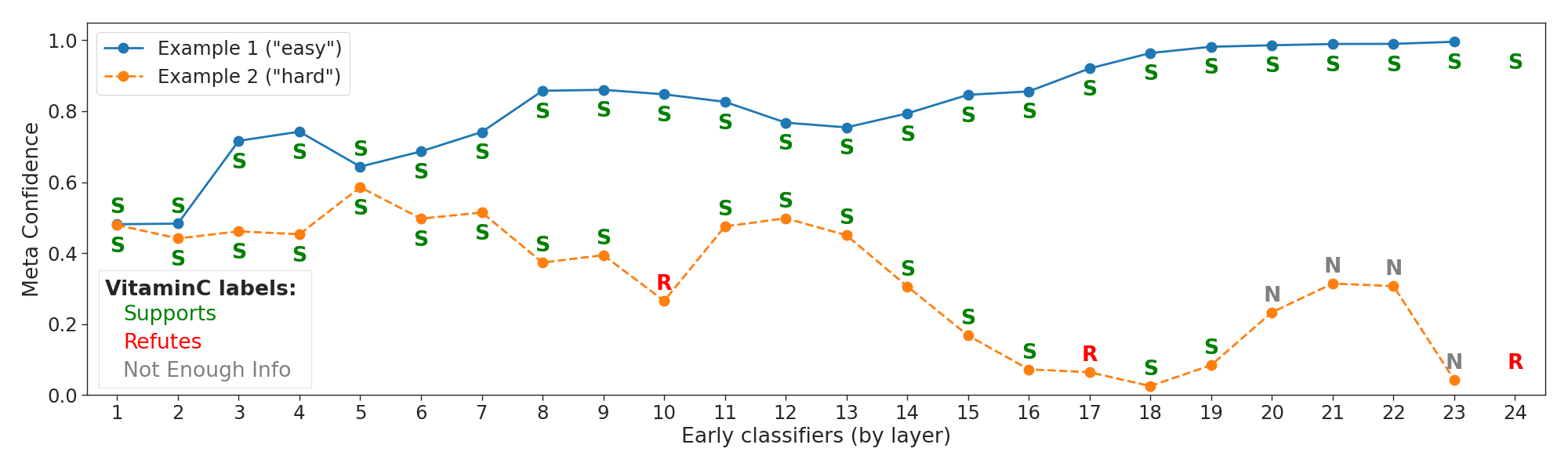} 
     \end{center}
      \vspace*{-0.4\baselineskip}
    \footnotesize
  \vspace*{-0.5\baselineskip}
        \textcolor{plotblue}{\textbf{(Ex.1)}}
        \textbf{Claim:} All airports in Guyana were closed for all international passenger flights until 1 May 2020.
        \newline 
        \phantom{\textbf{(Ex.1)} }\textbf{Evidence:}  Airports in Guyana are closed to all international passenger flights until 1 May 2020.
        \newline \vspace*{-0.5\baselineskip} \newline
        \textcolor{plotorange}{\textbf{(Ex.2)}}
        \textbf{Claim:} Deng Chao broke sales record for a romantic drama.
        \newline 
        \phantom{\textbf{(Ex.2)} }\textbf{Evidence:}  The film was a success and broke box office sales record for mainland-produced romance films.
  \vspace*{-0.6\baselineskip}
\end{tcolorbox}
    
    \small    
    \vspace{-5pt}
    \caption{Confidence levels given by our meta model regarding the \emph{consistency} of our prediction as computation progresses. 
    %Our meta model $\mathcal{M}_k$ estimates the likelihood that the prediction of a preemptive model after the computation of intermediate layer $k$ matches the prediction of the full model (after $l > k$ total layers).
    Ex.1 from the VitaminC fact verification dataset is ``easy'', and is classified consistently by all early classifiers $\mathcal{F}_k$ ($\texttt{Supports}$). The meta confidence captures this, and increases with time. Ex.2 is harder---and the prediction changes ($\texttt{Refutes} / \texttt{NEI}$) as it propagates though the Transformer layers. Appropriately, the meta confidence is  low. The exact exit layer of $\mathcal{G}$ is determined as a function of a user-specified tolerance $\epsilon$, see Eq.~\eqref{eq:marginalcoverage}.} %($\mathcal{R}_{\epsilon}$).}
    \vspace{-12pt}
    \label{fig:intro}
\end{figure*}
Quantifying the \emph{uncertainty} in a prediction in order to decide when additional computation is needed (or not) is critical to making predictions quickly without excessively sacrificing performance.
%  (or  not) Make predictions quickly while maintaining  performance is a trade-off that is critical to production-level machine learning systems. 
In this paper, we present \textbf{C}onfident \textbf{A}daptive \textbf{T}ransformers (CATs), a general method for increasing Transformer-based model efficiency while remaining \emph{confident} in the quality of our predictions. Specifically, given a fixed, expensive $l$-layer model $\mathcal{F}(x)$, we create an amortized model $\mathcal{G}(x)$ that includes early classifiers $\{\mathcal{F}_1, \ldots, \mathcal{F}_l\}$.\footnote{We simply define the final  $\mathcal{F}_l$ as $\mathcal{F}_l(x)  \triangleq \mathcal{F}(x) ~~ \forall x$.} We then make $\mathcal{G}$ provably consistent with the original $\mathcal{F}$ with arbitrarily high probability (e.g., 95\% of the time). This process is illustrated in Figure~\ref{fig:inference}.

%We denote an early classifier acting at layer $k$ out of $l$ total as $\mathcal{F}_k$, and the aggregated adaptive Transformer as $\mathcal{G} = \{\mathcal{F}_1, \ldots, \mathcal{F}_l\}$.\footnote{$\mathcal{F}_l(x)=\mathcal{F}(x) ~~ \forall x$, since we keep $\mathcal{F}$ fixed.}

Our approach builds on conformal prediction (CP), a model-agnostic and distribution-free framework for creating well-calibrated predictions~\cite{vovk2005algorithmic}.
Concretely, suppose we have been given $n$ examples, $X_i \in \mathcal{X}$, $i=1,\ldots,n$, as \linebreak \emph{unlabeled} calibration data, that have been drawn exchangeably from some underlying distribution $P$. Let $X_{n+1} \in \mathcal{X}$ be a new exchangeable test example for which we would like to make a prediction. 
The aim of our method is to construct $\mathcal{G}$ such that it agrees with $\mathcal{F}$ with \emph{distribution-free marginal coverage} at a tolerance level $\epsilon \in (0, 1)$, i.e.,
%
% \addtolength\abovedisplayskip{0ex}
% \addtolength\belowdisplayskip{0ex}
\begin{equation}
    \label{eq:marginalcoverage}
    \mathbb{P}\big(\mathcal{G}(X_{n+1}) = \mathcal{F}(X_{n+1})\big) \geq 1 - \epsilon.
\end{equation}
We consider $\mathcal{G}$ to be \emph{$\epsilon$-consistent} if the frequency of error, $\mathcal{G}(X_{n+1}) \neq \mathcal{F}(X_{n+1})$, does not exceed $\epsilon$.\footnote{For regression, we define  equality  as $|\mathcal{G}(\cdot) - \mathcal{F}(\cdot)| \leq \tau$.}
By design, this ensures that $\mathcal{G}$ preserves at least $(1-\epsilon)$-fraction of $\mathcal{F}$'s original performance. Within these constraints, the remaining challenge is to make $\mathcal{G}$ relatively efficient (e.g., a consistent, but vacuous, model is simply the identity $\mathcal{G} \triangleq \mathcal{F}$).

In order to support an efficient $\mathcal{G}$, we need a reliable signal for inferring whether or not the current prediction is likely to be stable. Past work~\cite[e.g.,][]{schwartz-etal-2020-right} rely on potentially poorly correlated metrics such as the early classifier's {softmax response}. We address this challenge by instead directly learning meta ``consistency predictors'' for each of the $l - 1$ early classifiers of our $l$ layer model, by leveraging patterns in past predictions.\footnote{We refer to the meta aspect of the classifier, not the optimization process (i.e., not to be confused with \emph{meta-learning}).}  Figure~\ref{fig:intro} demonstrates the progression of meta confidence scores across layers when applied to ``easy'' versus ``hard'' instances from the VitaminC fact verification task~\cite{Schuster2021}.

We  pair the scores of our meta classifier for each layer with a stopping rule that is calibrated using a unique twist on standard conformal prediction. Traditionally, CP is used to construct  prediction \emph{sets} that cover the desired target (e.g., $Y_{n+1}$) with high probability. We invert the CP problem to first infer the multi-label set of \emph{inconsistent} layers, and then exit at the first layer that falls in its complement. We then demonstrate that this can be reduced to setting a simple (but well-calibrated) exit threshold for the meta classifier scores. Our resulting algorithm is (1) fast to compute in parallel to the main Transformer, (2) requires only unlabeled data, and (3) is statistically efficient in practice, in the sense that it finds low exit layers on average while still maintaining the required predictive consistency.
%To form an efficient $\mathcal{G}$, we follow recent work on \emph{adaptive early exiting}~\cite[as in][\emph{inter alia}]{xin-etal-2020-deebert, schwartz-etal-2020-right}, where we take advantage of the observation that task instances vary in complexity, and train ``early'' classifiers on top of the  simpler features computed by intermediate layers in the Transformer model. We denote an early classifier acting at layer $k$ out of $l$ total as $\mathcal{F}_k$, and the aggregated adaptive Transformer as $\mathcal{G} = \{\mathcal{F}_1, \ldots, \mathcal{F}_l\}$.\footnote{$\mathcal{F}_l(x)=\mathcal{F}(x) ~~ \forall x$, since we keep $\mathcal{F}$ fixed.} After the evaluation of each $\mathcal{F}_k \in \mathcal{G}$, we choose whether or not to keep the prediction (and stop further computation). This process is illustrated in Figure~\ref{fig:inference}. Our method applies to any multilayer network, without requiring any retraining or distillation of the original parameters.

% by \emph{meta-learning} a calibrated stopping rule $\mathcal{M}_k(X_{n+1}, \mathcal{G})$
%Naively deciding when to preempt computation, however, will not lead to $\epsilon$-{consistent} models in general. 

We validate our method on four diverse NLP tasks---covering both classification and regression, different label space sizes, and varying amounts of training data.
We find that it constitutes a simple-yet-effective approach to confident adaptive prediction with minimal interventions and desirable theoretical guarantees.
In short, we provide:
\vspace{-7.5pt}
\begin{enumerate}[leftmargin=*, noitemsep]
    \item A novel theoretical extension of conformal prediction to accommodate adaptive prediction;\vspace{5pt}
    \item An effective meta consistency classifier for deriving a confident ``early exiting'' model;\vspace{5pt}
    \item A demonstration of the utility of our framework on both classification and regression tasks, where we show significant efficiency improvements, while {guaranteeing} high consistency.
    %on the range of tasksConsistent and significant empirical gains across four diverse domains and multiple target performance levels.
\end{enumerate}

\section{Related Work}
\newpar{Adaptive computation}
Reducing the computational cost of neural models has received intense interest. % from the machine learning community. 
% %Consequently, many efficiency improvements have been proposed in recent years.
% A popular approach is model compression---either by distillation~\cite{sanh2020distilbert}, in which a smaller model is trained to imitate the larger one, or by model pruning~\cite{Fan2020Reducing,michel2019heads}, in which parameters are removed from the original model. Both approaches result in a single model that is used for all future inputs.
Adaptive approaches adjust the amount of computation per example to \emph{amortize} the total inference cost~\cite[see][\emph{inter alia}]{teerapittayanon2017branchynet, graves2017adaptive, huang2018multiscale, pmlr-v97-kaya19a, Wang_2018_ECCV}. As discussed in \S\ref{sec:introduction}, our method is inspired by the approach of \citet{schwartz-etal-2020-right} and others \cite{liu-etal-2020-fastbert, geng2021romebert, bert_loses_patience}, where they preempt computation if the softmax value of any early classifier is above a predefined threshold. Yet unlike our approach,  their model is not guaranteed to be accurate. %, even after softmax calibration~\cite{guo-2017-deep}.
%Several approaches to early exiting also include fine-tuning stages to improve efficiency~\cite{liu-etal-2020-fastbert, geng2021romebert, bert_loses_patience}. In this work, we choose to avoid this step in order to allow our model to be widely applied with minimal overhead.
%
In concurrent work, \citet{xin-etal-2021-berxit} propose a meta confidence classifier similar to ours. However, as in previous work, they do not address the calibration part to guarantee consistency.
% As in previous work, however, they do not address how to calibrate their model to guarantee a particular desired level of performance.

% \newpar{Uncertainty estimation} A large amount of research in model confidence has been dedicated towards \emph{calibrating} the model posterior,  $p_\theta(\hat{y}_{n+1}|x_{n+1})$, such that the true accuracy, $y_{n+1} = \hat{y}_{n+1}$, 
% %where $\hat{y}_{n+1} = \arg\!\max p_\theta(y_{n+1}|x_{n+1})$,
% is indeed equal to the estimated probability~\cite[][]{niculescu2005predicting, lakshinarayanan2017ensemble, lee2018training}. In theory, these estimates could be used to create confident early exit conditions (such as when the posterior exceeds a certain threshold)---and this is indeed the approach taken by \citet{schwartz-etal-2020-right}. However, these methods are not guaranteed to be accurate with finite amounts of training data, and often still suffer from miscalibration in practice. This is especially true for modern neural networks~\cite{pmlr-v70-guo17a, ashukha2020pitfalls,  hirschfeld2020uncertainty}. 
% Alternatives such as Bayesian approaches can be useful and satisfy different desiderata~\cite[][]{neal1996bayesian, graves2011vi, hernandez2015bnn, gal2016dropout}, but do not necessarily give well-calibrated probabilities (e.g., due to model misspecification, choice of prior, or approximation issues). Our CP-based method, on the other hand, gives well-calibrated distribution-free guarantees---even in finite samples.  

\newpar{Confident prediction} A large amount of research has been dedicated towards \emph{calibrating} the model posterior,  $p_\theta(\hat{y}_{n+1}|x_{n+1})$, such that the accuracy, $y_{n+1} = \hat{y}_{n+1}$, 
% %where $\hat{y}_{n+1} = \arg\!\max p_\theta(y_{n+1}|x_{n+1})$,
is indeed equal to the estimated probability~\cite[][]{niculescu2005predicting, gal2016dropout, pmlr-v70-guo17a}. In theory, these estimates could be leveraged to create confident early exits---e.g., similar to  \citet{schwartz-etal-2020-right}. Ensuring calibrated probabilities of this form is hard, however, and existing methods often still suffer from miscalibration. Additionally, many methods exist for bounding the true error of a classifier~\cite{langford2005, park2021pac}, but do not give end-users opportunities to control it. More similar to our work, selective classification~\cite{geifman2017selective} allows the model to abstain from answering when not confident, in order to maintain a target error rate only over answered inputs. Our work gives a different and statistically efficient technique applied to \emph{consistent} prediction.

\newpar{Conformal prediction} CP~\cite{vovk2005algorithmic} typically is formulated in terms of prediction \emph{sets} $\mathcal{C}(X_{n+1})$, where finite-sample, distribution-free guarantees can be given over the event that $\mathcal{C}$ contains $Y_{n+1}$. As we discuss in \S\ref{sec:conformal}, internally our method follows a similar approach in which we try to conservatively identify the inadmissible set of all layers that are \emph{inconsistent} (and exit at the first layer that falls in that set's complement). Most relevant to our work, \citet{cauchois2020knowing} presents algorithms for conformal multi-label predictions. We leverage similar methods in our model, but formulate our solution in terms of the \emph{complement} of a multi-label set of inconsistent \emph{predictions}. Our work adds to several recent directions that explore CP in the context of risk-mitigating applications~\cite[][\emph{inter alia}]{lei2020causal, Romano2020With, bates-rcps, fisch2021admission}, or meta-learning settings~\cite{fisch2021meta}.

\section{Early Exiting Transformers}\label{sec:early_trans}
In the following, we describe our dynamic early exiting model. We summarize early classification (following previous work) for convenience (\S\ref{sec:preemptive}), and then present our novel meta consistency classifier (\S\ref{sec:meta}). We focus on classification and regression tasks, given a model $\mathcal{F}(x) = y$. We assume that $\mathcal{F}$ maps the input $x \in \mathcal{X}$ into a series of feature representations before making the prediction $y \in \mathcal{Y}$. Here, $\mathcal{F}$ is a multilayered Transformer~\cite{transformer} composed of $l$ layers (although our method can be applied to any multilayer network).

For all downstream tasks we follow standard practice and assume that the input contains a $\texttt{[CLS]}$ token whose representation is used for prediction.
%is either a sentence $x_1$, or a pair of sentences $(x_1, x_2)$. %Following standard practice, we take the model input $x$ as $x = \texttt{[CLS]} x_1 \texttt{[SEP]}$,  or $x = \texttt{[CLS]} x_1 \texttt{[SEP]} x_2 \texttt{[SEP]}$.
For classification, we use a task-specific head, $\mathrm{softmax}(\mathbf{W}_o(\phi(\mathbf{W}_p\mathbf{h}_{\texttt{[CLS]}})))$, where $\mathbf{h}_{\texttt{[CLS]}} \in \mathbb{R}^d$ is the hidden representation of the $\texttt{[CLS]}$ token,\footnote{$d$ varies by $\mathcal{F}$. In Albert-xlarge $d=2048$.} $\phi$ is a nonlinear activation, and $\mathbf{W}_*$ are linear projections, where $\mathbf{W}_p \in \mathbb{R}^{d \times d}$ and $\mathbf{W}_o \in \mathbb{R}^{|\mathcal{Y}| \times d}$. % Following the common practice for sequence classification Transformers~\citep{devlin-etal-2019-bert}, for the top layer we use $\mathbf{W}_p^{(l)} \in \mathbb{R}^{d \times d}$ and $\mathbf{W}_o^{(l)} \in \mathbb{R}^{|\mathcal{Y}| \times d}$. %For all earlier classifiers, we reduce the first projection to $d_e < d$, resulting in $\mathbf{W}_p^{(k)} \in \mathbb{R}^{d_e \times d}$ and $\mathbf{W}_o^{(k)} \in \mathbb{R}^{|\mathcal{Y}| \times d_e}$.
Regression is treated similarly, but uses a 1-d output projection, $\mathbf{w}_o \cdot \mathbf{h}_{\texttt{[CLS]}}$.

\subsection{Early predictors}\label{sec:preemptive}
$\mathcal{F}$'s structure yields a sequence of hidden $\texttt{[CLS]}$ representations, $\{\mathbf{h}^{(1)}_\texttt{[CLS]}, \ldots, \mathbf{h}^{(l)}_\texttt{[CLS]}\}$, where $\mathbf{h}^{(k)}_\texttt{[CLS]} \in \mathbb{R}^d$ is the representation after applying layer $k$.
%Even without any special fine-tuning, we observe that $\mathbf{h}^{(k)}_\texttt{[CLS]}$ naturally becomes more informative with respect to the output $y$ as $k \rightarrow l$. For easy $x$, many intermediate $\mathbf{h}^{(k)}_\texttt{[CLS]}$ ($k \ll l$), are informative enough to make an accurate prediction. 
%
After each intermediate layer $k < l$, we train an early classification head that is similar to the head used in $\mathcal{F}$, but reduce the dimensionality of the first projection to $\mathbf{W}_p^{(k)} \in \mathbb{R}^{d_e \times d}$ (this is purely for efficiency\footnote{We simply set $d_e=32$ in all our experiments.}). The final $\mathcal{F}_l$ is unchanged from $\mathcal{F}$. These  extra $(l - 1) \times (d_e \times d + d_e \times |\mathcal{Y}|)$ parameters are quick to tune on top of a fixed $\mathcal{F}$, and we can reuse $\mathcal{F}$'s training data as $\mathcal{D}_{\mathrm{tune}}$.\footnote{Or if $\mathcal{D}_{\mathrm{tune}}$ is unlabeled, we can use $\mathcal{F}(x)$ as labels.} The classifier $\mathcal{F}_k(x) = \mathrm{softmax}(\mathbf{W}^{(k)}_o(\phi(\mathbf{W}^{(k)}_p\mathbf{h}^{(k)}_{\texttt{[CLS]}})))$ is then used after layer $k$ to get an early prediction candidate. Early regression is handled similarly.

\subsection{Meta early exit classifier}\label{sec:meta}

To decide \emph{when} to accept the current prediction and stop computation, we require some signal as to how likely it is that $\mathcal{F}_k(x) = \mathcal{F}(x)$. Previous work relies on intrinsic measures (e.g., softmax response). Here, we present a \emph{meta} classifier to explicitly estimate the consistency of an early predictor.
Given fixed $\mathcal{F}_k$ and $\mathcal{F}$, we train a small binary MLP, $\mathcal{M}_k(x) \in \mathbb{R}$, on another \linebreak \emph{unlabeled} (limited) sample of task in-domain data, $\mathcal{D}_{\mathrm{meta}}$.
% \footnote{We can also reuse a sample of $\mathcal{F}$'s training data distinct of $\mathcal{D}_{\mathrm{tune}}$.}
%
% During training, we minimize a binary cross entropy loss:
% \begin{equation}
%     \mathcal{L}_{BCE} = -\sum_{x} \mathbf{1}\{\mathcal{F}_k(x) = \mathcal{F}(x)\}\log \mathcal{M}_k(x) + \mathbf{1}\{\mathcal{F}_k(x) \neq \mathcal{F}(x)\}(1 - \log \mathcal{M}_k(x))
% \end{equation}
As input, % to $\mathcal{M}_k$ for layer $k$,
we provide the current ``early'' hidden state $\phi(\mathbf{W}^{(k)}_p\mathbf{h}^{(k)}_{\texttt{[CLS]}})$, in addition to several processed meta features, see Table~\ref{tab:features}. We then train $\mathcal{M}_k$ with a binary cross entropy objective, where we maximize the likelihood of predicting $\mathbf{1}\{\mathcal{F}_k(x_i) = \mathcal{F}(x_i)\}$ for $x_i \in \mathcal{D}_{\mathrm{meta}}$.%\footnote{Model parameters are shared across $\mathcal{M}_{1:l}$.}

Using the trained $\mathcal{F}_k$ and $\mathcal{M}_k$, we define the full adaptive model $\mathcal{G}$ using the prediction rule
\begin{equation}
\label{eq:g}
\resizebox{.89\hsize}{!}{$\displaystyle
     \mathcal{G}(x; \bm{\tau}) := \left\{
     \begin{array}{@{}l@{\thinspace}l}
       \mathcal{F}_1(x)  &\hspace{6mm} \text{if } \mathcal{M}_1(x) > \tau_1, \\
       \mathcal{F}_2(x)  &\hspace{6mm} \text{else if } \mathcal{M}_2(x) > \tau_2, \\
       &\hspace{2mm}\vdots \\ 
       \mathcal{F}_l(x)  &\hspace{6mm} \text{otherwise}, \\
     \end{array}
   \right.
 $}
\end{equation}
where $\bm{\tau} = (\tau_1, \ldots, \tau_{l-1})$ are confidence thresholds. The key challenge is to \emph{calibrate} $\tau_k$ such that $\mathcal{G}$ guarantees $\epsilon$-consistent performance per Eq.~\eqref{eq:marginalcoverage}.

\begin{table}[t]
\centering
\small
\begin{tabular}{p{0.14\linewidth} p{0.70\linewidth}}
\toprule
Meta Feature        & Description                       \\ \midrule
$\hat{y}_k$         & The current prediction. \\
$\mathrm{history}$    & The past $k - 1$ predictions, $\hat{y}_{1:k-1}$\newline (For classification we give $p_k(\hat{y}_k | x)$).                             \\
% $\mathrm{progress}$ & The current layer progress, $k / l$.      \\[2mm]
%\multicolumn{2}{c}{{\ul \textit{Classification only}}}  \\[2mm]
$p_k^{\mathrm{max}}$  & Prob.\ of the prediction, $p_k(\hat{y}_k | x)$.                                                                 \\
$p_k^{\mathrm{diff}}$ & Difference in prob.\ of top predictions, $p_k(\hat{y}_k | x) - {\arg\!\max}_{y_k \neq \hat{y}_k}~p_k(y_k | x)$. \\ \bottomrule
\end{tabular}
\vspace{-5pt}
\caption{Additional meta features used as input to the meta early exit classifier, $\mathcal{M}_k$. Where specified, the probability $p_k$ is taken from the model's early softmax. $p_k^{\mathrm{max}}$ and $p_k^{\mathrm{diff}}$ are only used for classification tasks.} %  All features are simply concatenated together.}
\vspace{-10pt}
\label{tab:features}
\end{table}

\subsection{Warmup: development set calibration}
\label{sec:naive}
A simple approach to setting $\bm{\tau}$ is to optimize performance on a development set $\mathcal{D}_{\mathrm{dev}}$, subject to a constraint on the empirical inconsistency:
\begin{equation}
\begin{split}
 &\bm{\tau}^* :=~\underset{(\tau_1, \ldots, \tau_{l-1})}{\mathrm{minimize}}~~ \widehat{\mathbb{E}}_{\mathrm{dev}}[\mathrm{exit}(G(X; \bm{\tau}))] \\
 &\hspace{0cm}\text{s.t.  }~\widehat{\mathbb{E}}_{\mathrm{dev}}[\mathbf{1}\{\mathcal{G}(X; \bm{\tau}) = \mathcal{F}(X)\}] \ge 1-\epsilon,
\end{split}\label{eq:dev_set_thres}
\end{equation}
% \begin{equation}
% \begin{split}
%  \bm{\tau}^*& :=~\underset{(\tau_1, \ldots, \tau_{l-1})}{\mathrm{minimize}}~~ \widehat{\mathbb{E}}_{\mathrm{dev}}[|\mathcal{G}(X; \bm{\tau})|] \\
%  &\hspace{0cm}\text{s.t.  }~\widehat{\mathbb{E}}_{\mathrm{dev}}[\mathbf{1}\{\mathcal{G}(X) = \mathcal{F}(X)\}] \ge 1-\epsilon,
% \end{split}\label{eq:dev_set_thres}
% \end{equation}
where $\mathrm{exit}(\cdot)$ measures the exit layer, and  $\widehat{\mathbb{E}}_{\mathrm{dev}}$ is simply the average over $\mathcal{D}_{\mathrm{dev}}$. Using a standard error bound~\cite[][]{langford2005} over a separate split, $\mathcal{D}_{\mathrm{cal}}$, we can then derive the following guarantee:

\begin{proposition}
\label{prop:naive}
Let $X_i$, $i = 1, \ldots, n$ be an i.i.d. sample with $s = \sum_{i=1}^n \mathbf{1}\{\mathcal{G}(X_i; \bm{\tau}) = \mathcal{F}(X_i)\}$. Then, up to a confidence level $\delta$, we have that
\begin{equation}
\label{eq:test_bound}
    \mathbb{P}(\mathbb{P}(\mathcal{G}(X; \bm{\tau}) = \mathcal{F}(X)) \geq 1 - \tilde{\epsilon}) \geq 1 - \delta,
\end{equation}
where $\tilde{\epsilon}$ is the solution to $\mathrm{Beta}(s, n - s + 1) = \delta$, and $\mathrm{Beta}$ is the incomplete beta function.
\end{proposition}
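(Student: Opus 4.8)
The plan is to recognize Eq.~\eqref{eq:test_bound} as the classical Clopper--Pearson lower confidence bound for a binomial proportion (equivalently, the ``test-set bound'' of \cite{langford2005}) and to verify it directly by inverting a binomial tail. Since $\bm{\tau}$ is fixed (it was chosen on the independent split $\mathcal{D}_{\mathrm{dev}}$) and the $X_i$ are i.i.d., the indicators $Z_i := \mathbf{1}\{\mathcal{G}(X_i;\bm{\tau}) = \mathcal{F}(X_i)\}$ are i.i.d.\ Bernoulli with success probability $p := \mathbb{P}(\mathcal{G}(X;\bm{\tau}) = \mathcal{F}(X))$, so $s = \sum_{i=1}^{n} Z_i \sim \mathrm{Binomial}(n,p)$. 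The claim then reduces to showing that the random quantity $\hat{p}_L := 1 - \tilde{\epsilon}$, which is a deterministic function of $s$, satisfies $\mathbb{P}(p \ge \hat{p}_L) \ge 1 - \delta$.

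For an integer $k$ let $h_k(p) := \mathbb{P}_{S \sim \mathrm{Binomial}(n,p)}(S \ge k) = \sum_{j=k}^{n}\binom{n}{j}p^j(1-p)^{n-j}$. First I would record the standard facts that, for $1 \le k \le n$, the map $p \mapsto h_k(p)$ is continuous and strictly increasing on $[0,1]$ with $h_k(0)=0$, $h_k(1)=1$, and that the classical binomial--beta identity gives $h_k(p) = I_p(k,\,n-k+1)$, where $I_p(a,b) := B(a,b)^{-1}\!\int_0^p t^{a-1}(1-t)^{b-1}\,dt$ is the regularized incomplete beta function. Hence for each $k$ there is a unique root $\hat{p}_L(k) \in (0,1)$ of $h_k(\,\cdot\,) = \delta$ (with the convention $\hat{p}_L(0) := 0$ for the vacuous case $s=0$), and reading off $k=s$ this root is exactly the value $1-\tilde{\epsilon}$ characterized by $I_{1-\tilde{\epsilon}}(s,\,n-s+1) = \delta$ in the statement. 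A second easy monotonicity fact is that $k \mapsto \hat{p}_L(k)$ is non-decreasing: since $h_{k+1} \le h_k$ pointwise and both are increasing, a larger argument is needed to drive $h_{k+1}$ down to $\delta$.

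With these in hand the coverage argument is short. Fix the true $p$ and consider the bad event $\{\hat{p}_L(s) > p\}$. If $\hat{p}_L(k) \le p$ for every $k$ this event is empty and we are done; otherwise, because $\{k : \hat{p}_L(k) > p\}$ is an up-set (by monotonicity of $\hat p_L$), it equals $\{k \ge k^*\}$ for $k^* := \min\{k : \hat{p}_L(k) > p\}$, and $1 \le k^* \le n$ since $\hat{p}_L(0) = 0 \le p$. Thus $\{\hat{p}_L(s) > p\} = \{s \ge k^*\}$, so $\mathbb{P}(\hat{p}_L(s) > p) = \mathbb{P}_{S \sim \mathrm{Binomial}(n,p)}(S \ge k^*) = h_{k^*}(p) \le h_{k^*}(\hat{p}_L(k^*)) = \delta$, where the inequality uses $p < \hat{p}_L(k^*)$ together with the monotonicity of $h_{k^*}$. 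Taking complements yields $\mathbb{P}(p \ge 1-\tilde{\epsilon}) \ge 1-\delta$, which is Eq.~\eqref{eq:test_bound}.

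I do not expect a deep obstacle here: this is essentially the textbook Clopper--Pearson bound. The points that need care are (i) that $\tilde{\epsilon}$ is itself a function of the random $s$, so the event in Eq.~\eqref{eq:test_bound} must be handled by reducing it to a single binomial tail probability $\mathbb{P}(S \ge k^*)$ as above, rather than by treating the threshold as fixed; (ii) the two monotonicity statements --- stochastic ordering in $p$ at a fixed threshold, and monotonicity of $\hat{p}_L(k)$ in $k$ at the fixed level $\delta$ --- which is where the bulk of the (routine) verification lies; and (iii) the boundary cases $s=0$, where $\tilde{\epsilon}=1$ and the bound is vacuous but valid, and $s=n$, where $1-\tilde{\epsilon}=\delta^{1/n}$. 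The conversion between the binomial tail and the incomplete beta function asserted in the statement is just the classical identity and requires nothing beyond invoking it.
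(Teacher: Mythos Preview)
Your proposal is correct and takes essentially the same approach as the paper: both recognize that the i.i.d.\ indicators make $s$ Binomial and that Eq.~\eqref{eq:test_bound} is precisely the one-sided Clopper--Pearson lower confidence bound. The paper simply cites Clopper--Pearson and stops, whereas you additionally spell out the tail-inversion argument and the needed monotonicity facts; this extra detail is sound but not required.
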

A proof is given in Appendix~\ref{app:proofs}.
Though in practice $\tilde{\epsilon}$ might be close to $\epsilon$ for most well-behaved distributions, unfortunately Eq.~\eqref{eq:test_bound} does not give a fully \emph{specifiable} guarantee as per Eq.~\eqref{eq:marginalcoverage}. Readjusting $\bm{\tau}$ based on $\mathcal{D}_{\mathrm{cal}}$ requires correcting for multiple testing in order to remain theoretically valid, which can quickly become statistically inefficient.
In the next section, we provide a novel calibration approach that allows us to guarantee a target performance level with strong statistical efficiency.

\section{Conformalized Early Exits}
\label{sec:conformal}
We now formulate the main contribution of this paper, which is a \emph{distribution-free} and \emph{model-agnostic} method based on CP for guaranteeing any performance bound an end-user chooses to specify.\footnote{See \citet{shafer2008tutorial} for a concise review of CP.} 
Our training (\S\ref{sec:early_trans}), conformal calibration (\S\ref{sec:conformal}), and inference pipelines are summarized in Algorithm~\ref{alg:overall}.

\subsection{Conformal formulation}
Let $\mathcal{I}(x) := \{i \colon \mathcal{F}_{i}(x) \neq \mathcal{F}(x)\}$ be the index set of layers that are \emph{inconsistent} with the final model's prediction. To maintain $\epsilon$-consistency, we must avoid using any of the predictions specified by this set, $\mathcal{F}_i(x)$ where $i \in \mathcal{I}(x)$, more than $\epsilon$-fraction of the time for $x \in \mathcal{X}$.
In \S\ref{sec:calibration}, we show how $\mathcal{M}_{1:l-1}$ can be paired with a conformal procedure to obtain calibrated thresholds $\bm{\tau} = (\tau_1, \ldots, \tau_{l-1})$ such that we obtain a conservative prediction of $\mathcal{I}(x)$,
\begin{equation}
    \label{eq:c}
    \cset(x) := \{k \colon \mathcal{M}_k(x) \leq \tau_k\},
\end{equation}
where we ensure that $\mathcal{I}(x) \subseteq \cset(x)$ with probability at least $1 - \epsilon$. 
Proposition~\ref{prop:early_exit} states our guarantee when $\bm{\tau}$ is paired with $\mathcal{G}$ following Eq.~\eqref{eq:g}.
\begin{proposition}%[Conformal early exits]
\label{prop:early_exit}
Assume that examples $X_i$, $i=1,\ldots, n+1$ are exchangeable. For any $\epsilon \in (0, 1)$, let the index set $\cset$ (based on the first $n$ examples) be the output of conformal procedure satisfying
\begin{equation}
\label{eq:subset}
\mathbb{P}(\mathcal{I}(X_{n+1}) \subseteq \cset(X_{n+1})) \geq 1 - \epsilon.
\end{equation}
Define $K := \min \{j: j \in \cset^c(X_{n+1})\}$, the first exit layer selected by $\mathcal{G}$ following Eq.~\eqref{eq:g}.\footnote{Here $A^c$ denotes the complement index set $\{i \colon i \not\in A\}$.} Then
\begin{equation}
\mathbb{P}(\mathcal{F}_K(X_{n+1}) = \mathcal{F}(X_{n+1})) \geq 1 - \epsilon.
\end{equation}
\end{proposition}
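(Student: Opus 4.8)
The plan is to show that the event $\{\mathcal{F}_K(X_{n+1}) = \mathcal{F}(X_{n+1})\}$ is implied by the event $\{\mathcal{I}(X_{n+1}) \subseteq \cset(X_{n+1})\}$, and then invoke the hypothesis \eqref{eq:subset} to conclude. So the core of the argument is a deterministic (pointwise) set-theoretic implication, and probability enters only at the very end by monotonicity of $\mathbb{P}$.

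First I would unpack the definition of $K$. By construction $K = \min\{j : j \in \cset^c(X_{n+1})\}$, and this minimum is well-defined because $l \notin \cset(X_{n+1})$: indeed $\cset$ only ever contains indices in $\{1,\dots,l-1\}$ (the thresholds $\bm\tau$ are defined for the $l-1$ early classifiers), so $l \in \cset^c(X_{n+1})$ always, which also confirms that $\mathcal{G}$ following \eqref{eq:g} exits at $K$ — i.e.\ $\mathcal{G}(X_{n+1}; \bm\tau) = \mathcal{F}_K(X_{n+1})$. The key point is then simply that $K \in \cset^c(X_{n+1})$ by definition of the minimum.

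Next, condition on the event $E := \{\mathcal{I}(X_{n+1}) \subseteq \cset(X_{n+1})\}$. Taking complements, $E$ is equivalent to $\cset^c(X_{n+1}) \subseteq \mathcal{I}^c(X_{n+1})$. Since $K \in \cset^c(X_{n+1})$, on the event $E$ we get $K \in \mathcal{I}^c(X_{n+1})$, i.e.\ $K \notin \mathcal{I}(X_{n+1})$. But $\mathcal{I}(X_{n+1}) = \{i : \mathcal{F}_i(X_{n+1}) \neq \mathcal{F}(X_{n+1})\}$, so $K \notin \mathcal{I}(X_{n+1})$ means exactly $\mathcal{F}_K(X_{n+1}) = \mathcal{F}(X_{n+1})$. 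Hence $E \subseteq \{\mathcal{F}_K(X_{n+1}) = \mathcal{F}(X_{n+1})\}$, and therefore
\[
\mathbb{P}(\mathcal{F}_K(X_{n+1}) = \mathcal{F}(X_{n+1})) \geq \mathbb{P}(E) = \mathbb{P}(\mathcal{I}(X_{n+1}) \subseteq \cset(X_{n+1})) \geq 1 - \epsilon,
\]
where the last inequality is the assumed coverage property \eqref{eq:subset}. This completes the proof.

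There is essentially no hard calculation here; the statement is a clean logical reduction once the definitions are aligned. The one subtlety worth stating carefully is the well-definedness of $K$ (that $\cset^c(X_{n+1})$ is nonempty), which relies on the fact that the last layer $l$ is never placed in $\cset$ — equivalently, that $\mathcal{G}$ always has the fallback branch $\mathcal{F}_l(x)$ in \eqref{eq:g}. The genuinely substantive content — constructing a conformal procedure whose output $\cset$ actually satisfies \eqref{eq:subset} in a distribution-free way — is deferred to \S\ref{sec:calibration} and is not part of this proposition; here we only need it as a black box.
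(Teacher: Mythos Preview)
Your proof is correct and follows essentially the same approach as the paper's: both reduce the desired bound to the chain $\{\mathcal{I}(X_{n+1}) \subseteq \cset(X_{n+1})\} = \{\cset^c(X_{n+1}) \subseteq \mathcal{I}^c(X_{n+1})\} \subseteq \{K \in \mathcal{I}^c(X_{n+1})\} = \{\mathcal{F}_K(X_{n+1}) = \mathcal{F}(X_{n+1})\}$ and then apply monotonicity together with \eqref{eq:subset}. Your treatment is in fact slightly more careful, since you explicitly verify that $K$ is well-defined via $l \in \cset^c(X_{n+1})$, a point the paper leaves implicit.
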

\begin{remark}
Note that Eq.~\eqref{eq:subset} is stricter than necessary. Fundamentally, we only require that $\mathbb{P}(K \in \mathcal{I}^c(X_{n+1})) \geq 1 - \epsilon$. Nevertheless, Eq.~\eqref{eq:subset} is easier to calibrate, and leads to strong empirical results despite being theoretically conservative. 
\end{remark}
\begin{remark}
During inference we do not fully construct $\cset$; it is only used to calibrate $\bm{\tau}$ beforehand.
\end{remark}
\subsection{Conformal calibration}
\label{sec:calibration}
We now describe our conformal procedures for calibrating $\bm{\tau}$.
Conformal prediction is based on hypothesis testing, where for a given input $x$ and possible output $y$, a statistical test is performed to accept or reject the null hypothesis that the pairing $(x, y)$ is correct. In our setting, we consider the null hypothesis that layer $k$ is \emph{inconsistent}, and we use $\mathcal{M}_k(x)$ as our test statistic. Since $\mathcal{M}_k$ is trained to predict  $\mathbf{1}\{\mathcal{F}_k(x_i) = \mathcal{F}(x_i)\}$, a high value of $\mathcal{M}_k(x)$ indicates how ``surprised'' we would be if layer $k$ was in fact inconsistent with layer $l$ for input $x$. Informally, a low level of surprise indicates that the current input ``conforms'' to past data.
To rigorously quantify the degree of conformity via the threshold $\tau_k$ for predictor $\mathcal{M}_k$, we use a held-out set of $n$ unlabeled, exchangeable examples, $\mathcal{D}_{\mathrm{cal}}$.% These examples do not need to be labeled. In fact, they can even be a batch of test examples in many problems. %\footnote{Again, the examples in $\mathcal{D}_{\mathrm{cal}}$ do not need to be labeled.}
% the degree of nonconformity It is important that $\mathcal{S}$ preserves exchangeability; i.e., $\mathcal{S}$ should be symmetric with respect to permutations of its inputs. In this work we use the ``split'' CP approach~\cite{Papadopoulos08}, where we treat $\mathcal{D}$ as a proper training set, independent of any of the subsequent $n+1$ exchangeable calibration examples used for CP. Specifically, as described in \S\ref{sec:meta}, we use $\mathcal{D}_{\mathrm{meta}}$ to train $\mathcal{M}_k$, and will now calibrate $\mathcal{M}_k$ on another held-out split of data, $\mathcal{D}_{\mathrm{cal}}$.\footnote{Again, the examples in $\mathcal{D}_{\mathrm{cal}}$ do not need to be \emph{labeled}.}

\subsubsection{Independent calibration} As a first approach, we construct $\cset(x)$ by composing $l -1$ separate tests for $\mathcal{F}_k(x) \neq \mathcal{F}(x)$, each with significance $\alpha_k$, where $\alpha_k$ are corrected for multiple testing.
%=  \omega_k\cdot \epsilon$ where $\sum_{k = 1}^{l} \omega_k = 1$ (i.e., a weighted Bonferroni correction). Our default is to weight each test equally, setting $\omega_k = l^{-1}$.
%
Let $v_k^{(1:n, \infty)}$ denote the inflated empirical distribution of inconsistent layer scores, $$\{\mathcal{M}_k(x_i) \colon x_i \in \mathcal{D}_\mathrm{cal},  \mathcal{F}_k(x_i) \ne \mathcal{F}(x_i) \} \cup \{\infty\}.$$ Inflating the empirical distribution is critical to our finite sample guarantee, see Appendix~\ref{app:proofs}. We then define $\tau^{\mathrm{ind}}_k = \mathrm{Quantile}\big(1 - \alpha_k, v_k^{(1:n, \infty)}\big)$, and predict the inconsistent index set  at $x \in \mathcal{X}$ as
\begin{equation}
\mathcal{C}_\epsilon^{\mathrm{ind}}(x) = \left\{k \colon \mathcal{M}_k(x) \leq \tau_k^{\mathrm{ind}}\right\}.
% \resizebox{.89\hsize}{!}{$\displaystyle
%  \hspace{-2mm}\mathcal{C}_\epsilon^{\mathrm{ind}}(x) = \left\{k \colon
%     \mathcal{M}_k(x) \leq \mathrm{Q}\big(1 - \tilde{\epsilon}_k, v_{k}^{(1:n, \infty)}\big)\right\}.\hspace{-1pt}
% $}
%\vspace{7pt}
\end{equation}
The following theorem states how to set each  $\alpha_k$ such that the quantiles $\tau_k^{\mathrm{ind}}$ yield a valid $\cset^{\mathrm{ind}}$.
\begin{theorem}
\label{thm:ind}
Let $\alpha_k = \omega_k \cdot \epsilon$, where $\omega_k$ is a weighted Bonferroni correction, i.e., $\sum_{k = 1}^{l -1} \omega_k = 1$. Then $\cset^{\mathrm{ind}}(X_{n+1})$ is a valid set that satisfies Eq.~\eqref{eq:subset}.
\end{theorem}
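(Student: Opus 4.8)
The plan is to control, for each $k$, the probability that the calibration procedure \emph{fails} at level $k$—meaning that layer $k$ is genuinely inconsistent with $\mathcal{F}$ on the test point, yet $\mathcal{M}_k(X_{n+1}) > \tau_k^{\mathrm{ind}}$ so that $k \notin \cset^{\mathrm{ind}}(X_{n+1})$—and then to union-bound these $l-1$ failure events. First I would fix $k$ and condition on the event $E_k = \{\mathcal{F}_k(X_{n+1}) \neq \mathcal{F}(X_{n+1})\}$; on this event, $X_{n+1}$ contributes an inconsistent-layer score $\mathcal{M}_k(X_{n+1})$ that is exchangeable with the inconsistent-layer scores $\{\mathcal{M}_k(X_i) : X_i \in \mathcal{D}_{\mathrm{cal}},\ \mathcal{F}_k(X_i) \neq \mathcal{F}(X_i)\}$ coming from the calibration set. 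This is the standard conformal setup: by exchangeability of the $X_i$, the rank of $\mathcal{M}_k(X_{n+1})$ among the inconsistent scores (including itself) is (sub-)uniform, and the inflation by $\{\infty\}$ in $v_k^{(1:n,\infty)}$ ensures the finite-sample quantile is taken correctly so that $\mathbb{P}\big(\mathcal{M}_k(X_{n+1}) > \tau_k^{\mathrm{ind}} \mid E_k\big) \leq \alpha_k$. (This is exactly the argument the excerpt defers to Appendix~\ref{app:proofs}, so I would cite it rather than re-derive it.)

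Next I would assemble the pieces. The event that $\cset^{\mathrm{ind}}$ fails to contain $\mathcal{I}(X_{n+1})$ is
\begin{equation*}
\{\mathcal{I}(X_{n+1}) \not\subseteq \cset^{\mathrm{ind}}(X_{n+1})\} = \bigcup_{k=1}^{l-1} \big(E_k \cap \{\mathcal{M}_k(X_{n+1}) > \tau_k^{\mathrm{ind}}\}\big),
\end{equation*}
since $\mathcal{I}(X_{n+1})$ is by definition the set of inconsistent layers among $1,\dots,l-1$ and $k \in \cset^{\mathrm{ind}}(X_{n+1})$ iff $\mathcal{M}_k(X_{n+1}) \leq \tau_k^{\mathrm{ind}}$. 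By a union bound and the per-layer bound from the previous paragraph,
\begin{equation*}
\mathbb{P}\big(\mathcal{I}(X_{n+1}) \not\subseteq \cset^{\mathrm{ind}}(X_{n+1})\big) \leq \sum_{k=1}^{l-1} \mathbb{P}\big(E_k \cap \{\mathcal{M}_k(X_{n+1}) > \tau_k^{\mathrm{ind}}\}\big) \leq \sum_{k=1}^{l-1} \alpha_k = \epsilon \sum_{k=1}^{l-1} \omega_k = \epsilon,
\end{equation*}
using $\alpha_k = \omega_k \epsilon$ and $\sum_k \omega_k = 1$. Taking complements gives Eq.~\eqref{eq:subset}, which is the claim.

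The main obstacle—really the only subtle point—is making the per-layer conformal guarantee rigorous, i.e., justifying that conditioning on $E_k$ preserves the exchangeability needed for the quantile bound, and that the $\{\infty\}$ inflation handles the finite-sample correction (the "$\leq \alpha_k$" rather than a defective "$< \alpha_k$" or an off-by-one). The conditioning step is delicate because $E_k$ depends on $X_{n+1}$; the clean way to see it is that among all points whose layer-$k$ prediction is inconsistent, the joint distribution is still exchangeable, so the standard split-conformal quantile lemma applies to that sub-collection. I would either invoke the appendix lemma directly or state it as a one-line sub-lemma. Everything else—the union bound, the bookkeeping with $\omega_k$—is routine. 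A minor caveat worth a sentence: no distributional assumption on $\mathcal{M}_k$ is needed (ties can be broken arbitrarily or handled by the $\leq$), which is what makes the result distribution-free, and the weights $\omega_k$ may be chosen before seeing $\mathcal{D}_{\mathrm{cal}}$ without affecting validity.
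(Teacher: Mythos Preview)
Your proposal is correct and follows essentially the same route as the paper: a per-layer conformal coverage bound via the inflated-quantile lemma, followed by a union (Bonferroni) bound over the $l-1$ layers. If anything, your treatment of the conditioning on $E_k$ is more explicit than the paper's own proof, which applies Lemma~\ref{lemma:quantile} somewhat loosely without spelling out that the exchangeability must be argued among the sub-collection of points whose layer-$k$ prediction is inconsistent.
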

\begin{remark}
$\omega_{1:l-1}$ can be tuned on a development set $\mathcal{D}_{\mathrm{dev}}$ as long as $\mathcal{D}_{\mathrm{dev}}$ is distinct from $\mathcal{D}_{\mathrm{cal}}$.
\end{remark}

\begin{figure}[t!]
    \centering
    \hypersetup{hidelinks}
    
     \vspace{-8pt}
     
    \begin{minipage}{1\linewidth}
    \begin{algorithm}[H]
    \footnotesize
    \caption{ \small Consistent accelerated inference.}
    \label{alg:overall}
    \textbf{Definitions:} $\mathcal{F}$ is a multilayered classifier trained on $\mathcal{D}_{\mathrm{train}}$. $\mathcal{D}_{\mathrm{tune}}$, $\mathcal{D}_{\mathrm{meta}}$ and $\mathcal{D}_{\mathrm{scale}}$ are collections of in-domain unlabeled data points (in practice, we reuse $\mathcal{D}_{\mathrm{train}}$ and divide it to 70/20/10\%, respectively). $\mathcal{D}_{\mathrm{cal}}$ has in-domain unlabeled examples not in $\mathcal{D}_{\mathrm{train}}$ (In practice, we take a subset of the task's validation set). $\epsilon$ is the user-specified consistency tolerance.
    \vspace{4pt}
    \begin{algorithmic}[1]

        \Function{Train }{$\mathcal{F}$, $\mathcal{D}_{\mathrm{tune}}$, $\mathcal{D}_{\mathrm{meta}}$}
        
            \State \textcolor{darkgreen}{\emph{\# Learns $\mathcal{F}_{{1}\dots{l-1}}$ and $\mathcal{M}_{{1}\dots{l-1}}$} components} 
             \State \textcolor{darkgreen}{\emph{\# of amortized model $\mathcal{G}$ for Eq.~\eqref{eq:g}} (see  \S\ref{sec:preemptive} and \S\ref{sec:meta}).}
            \State Initialize $\mathcal{G}$ from $\mathcal{F}$ and add early prediction heads.
            \State \textcolor{darkgreen}{\emph{\# (All of $\mathcal{F}$'s base parameters in $\mathcal{G}$ are frozen.)}}
            \State Train prediction heads $\mathcal{F}_{{1}\dots{l-1}}$ on $\mathcal{D}_{\mathrm{tune}}$. 
            \State Add meta early exit classifiers $\mathcal{M}_{{1}\dots{l-1}}$ to $\mathcal{G}$.
            \State \textcolor{darkgreen}{\emph{\# (All of $\mathcal{G}$'s other parameters are frozen.)}}
            \State Train meta early exit classifiers $\mathcal{M}_{{1}\dots{l-1}}$ on $\mathcal{D}_{\mathrm{meta}}$.
            \State Optionally apply temperature scaling using $\mathcal{D}_{\mathrm{scale}}$. %\textcolor{darkgreen}{\emph{\# Optional.}}
            \State \Return{$\mathcal{G}$}
        \EndFunction
        \vspace{4pt}
        \Function{Calibrate }{$\mathcal{G}$, $\mathcal{D}_{\mathrm{cal}}$, $\epsilon$}
        
            \State\textcolor{darkgreen}{\emph{\# Sets thresholds $\bm{\tau}$ of amortized model $\mathcal{G}$ for Eq.~\eqref{eq:g}}}
            \State\textcolor{darkgreen}{\emph{\# using \underline{shared calibration}~(see \S\ref{sec:calibration_shared}).}}
            \Let{$M$}{$\{\infty\}$}
            \For{$x \in \mathcal{D}_{\mathrm{cal}}$}
                \Let{$S$}{$\{\}$}
                \State\textcolor{darkgreen}{\emph{\# Record all inconsistent layers for input $x$}.}
                \State\textcolor{darkgreen}{\emph{\# Keep the highest (false) confidence score.}}
                \For{$k \in [1, l-1]$}
                    \If{$\mathcal{F}_k(x) \ne \mathcal{F}(x)$}
                        \Let{$S$}{$S \cup \mathcal{M}_k(x)$}
                    \EndIf
                \EndFor
                % \If{$|S|$ > 0}
                \Let{$M$}{$M \cup \max{(S)}$}
                % \EndIf
            \EndFor
            \State\textcolor{darkgreen}{\emph{\# Share one threshold across layers.}}
            \Let {$\tau^{\mathrm{share}}$}{$\mathrm{Quantile}\big(1 - \epsilon, M\big)$}
             \State \Return{$[\tau^{\mathrm{share}}] \times (l-1)$}
        \EndFunction
        \vspace{4pt}
        
        \Function{Predict }{$\mathcal{G}, \bm{\tau}$, $x$}
        \State\textcolor{darkgreen}{\emph{\# Implements Eq.~\eqref{eq:g}} to exit early with confidence.}
                \For{$k \in [1,l-1]$}
                      \State Compute the $k$-th prediction head of $\mathcal{G}$, $\mathcal{F}_k(x)$.
                     \If{$\mathcal{M}_k(x) > \tau_k$}%{ \Return{$\mathcal{F}_k(x)$}}  
                        \State \Return{$\mathcal{F}_k(x)$}
                     \EndIf
                \EndFor
                \State\textcolor{darkgreen}{\emph{\# Fallback to  prediction using full computation.}}
                \State \Return{$\mathcal{F}_l(x)$}
        \EndFunction
    \end{algorithmic}
    \end{algorithm}
    \end{minipage}
    \vspace{-12pt}
\end{figure}

\subsubsection{Shared calibration}\label{sec:calibration_shared} $\cset^{\mathrm{ind}}$ has the advantage of calibrating each layer independently. As $l$ grows, however, $\alpha_k$ will tend to $0$ in order to retain validity (as specified by Theorem~\ref{thm:ind}). As a result, $\cset^{\mathrm{ind}}$ will lose statistical efficiency. Following a similar approach to \citet{cauchois2020knowing} and \citet{fisch2021admission}, we compute a new test statistic, $\mathcal{M}_{\mathrm{max}}$, as
\begin{equation}
\resizebox{.89\hsize}{!}{$\displaystyle
    \hspace{-7pt}\mathcal{M}_{\mathrm{max}}(x) = \max_{k \in [l-1]}\hspace{-2pt} \left\{\mathcal{M}_k(x): \mathcal{F}_k(x) \ne \mathcal{F}(x) \right\}\hspace{-1pt}.\hspace{-2pt}
$}
\end{equation}
We discard ill-defined values when $\mathcal{M}_{\mathrm{max}}(x) = \max \varnothing$.
$\mathcal{M}_{\mathrm{max}}(x)$ reflects the \emph{worst-case} confidence across inconsistent layers for input $x$ (i.e., where $\mathcal{M}_k(x)$ predicts a high consistency likelihood for layer $k$ when layer $k$ is, in fact, \emph{inconsistent}).
This worst-case statistic allows us to keep a constant significance level $\epsilon$, even as $l$ grows. Let $m^{(1:n, \infty)}$ denote the inflated empirical distribution, \[
\resizebox{1.00\hsize}{!}{$\displaystyle
\hspace{-1pt}\{\mathcal{M}_{\mathrm{max}}(x_i) \colon x_i \in \mathcal{D}_\mathrm{cal}, \exists k ~\mathcal{F}_k(x_i) \neq \mathcal{F}(x_i)\} \cup \{\infty\}.$}
\]
We then define a single threshold shared across layers, $\tau^{\mathrm{share}} = \mathrm{Quantile}\big(1 - \epsilon, m^{(1:n, \infty)}\big)$, and predict the inconsistent index set  at $x \in \mathcal{X}$ as
\begin{equation}
\mathcal{C}_\epsilon^{\mathrm{share}}(x) = \left\{k \colon
    \mathcal{M}_k(x) \leq \tau^{\mathrm{share}}\right\}
\end{equation}
\begin{theorem}
\label{thm:share}
For any number of layers $l \in \mathbb{N}^{+}$, $\cset^{\mathrm{share}}(X_{n+1})$ is a valid set that satisfies Eq.~\eqref{eq:subset}.
\end{theorem}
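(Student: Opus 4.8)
The plan is to reduce the statement to a standard conformal (exchangeability) argument applied to the single real-valued statistic $\mathcal{M}_{\mathrm{max}}$. First I would reformulate the event in Eq.~\eqref{eq:subset} for the shared construction. Observe that $\mathcal{I}(X_{n+1}) \subseteq \cset^{\mathrm{share}}(X_{n+1})$ fails exactly when there is some layer $k$ with $\mathcal{F}_k(X_{n+1}) \ne \mathcal{F}(X_{n+1})$ (so $k \in \mathcal{I}(X_{n+1})$) but $\mathcal{M}_k(X_{n+1}) > \tau^{\mathrm{share}}$ (so $k \notin \cset^{\mathrm{share}}(X_{n+1})$). Taking the max over all inconsistent layers, this is precisely the event $\mathcal{M}_{\mathrm{max}}(X_{n+1}) > \tau^{\mathrm{share}}$, restricted to the case where at least one layer is inconsistent. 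When no early layer is inconsistent, $\mathcal{I}(X_{n+1}) = \varnothing \subseteq \cset^{\mathrm{share}}(X_{n+1})$ trivially, so those inputs only help. Hence it suffices to show $\mathbb{P}\big(\mathcal{M}_{\mathrm{max}}(X_{n+1}) > \tau^{\mathrm{share}},\ \exists k\, \mathcal{F}_k(X_{n+1}) \ne \mathcal{F}(X_{n+1})\big) \le \epsilon$.

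Next I would invoke exchangeability. Condition on the event $E$ that $X_{n+1}$ has at least one inconsistent early layer, and more carefully, work on the subsequence of those $X_i$ (among $i=1,\dots,n+1$) for which the ``$\exists k$'' predicate holds; exchangeability of $X_{1:n+1}$ is inherited by this selected subsequence because membership is determined pointwise by each $X_i$ alone. On that subsequence the values $\mathcal{M}_{\mathrm{max}}(X_i)$ are exchangeable real numbers, and $\tau^{\mathrm{share}}$ is the $\lceil (1-\epsilon)(N+1)\rceil$-th order statistic of the $N$ calibration values together with the added $+\infty$ (this is the role of the inflated distribution $m^{(1:n,\infty)}$ — the $\infty$ atom makes the quantile a valid finite-sample bound rather than needing the continuity assumption). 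By the standard conformal rank argument, the rank of $\mathcal{M}_{\mathrm{max}}(X_{n+1})$ among the inflated pooled set is uniform, so the probability it exceeds the $(1-\epsilon)$-quantile is at most $\epsilon$. This yields the bound conditionally on the number $N$ of qualifying points (and unconditionally after averaging), which combined with the first step gives Eq.~\eqref{eq:subset}. I would cite the exchangeability/quantile lemma from Appendix~\ref{app:proofs} (the same one underlying Theorem~\ref{thm:ind}) rather than reprove it.

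The key point — and the only place where $l$ enters — is that collapsing the $l-1$ per-layer tests into the single statistic $\mathcal{M}_{\mathrm{max}}$ means there is no multiple-testing correction at all: the significance level stays at $\epsilon$ regardless of $l$, which is exactly the claimed ``for any number of layers $l \in \mathbb{N}^{+}$'' uniformity. The main obstacle I anticipate is bookkeeping around the conditioning event $E = \{\exists k\, \mathcal{F}_k(X_{n+1}) \ne \mathcal{F}(X_{n+1})\}$ and the ``$\max\varnothing$'' discards: one must check that restricting to the sub-population where $\mathcal{M}_{\mathrm{max}}$ is well-defined preserves exchangeability and that the discarded (fully-consistent) test point genuinely cannot cause a violation, so that dropping it is safe. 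Handling the event that the qualifying subsequence is empty (in which case the claim is vacuous for that draw) is a minor edge case. Once that is set up cleanly, the probability bound is immediate from the conformal quantile inequality, so the proof should be short.
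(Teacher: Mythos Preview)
Your proposal is correct and follows essentially the same route as the paper's proof: reduce the subset event to the single test $\mathcal{M}_{\mathrm{max}}(X_{n+1}) \le \tau^{\mathrm{share}}$, invoke exchangeability of the $\mathcal{M}_{\mathrm{max}}(X_i)$, and apply the inflated-quantile lemma (Lemma~\ref{lemma:quantile}). The paper's proof is terser and simply asserts exchangeability of $M^{(i)}=\mathcal{M}_{\mathrm{max}}(X_i)$ without explicitly handling the discarded $\max\varnothing$ cases; your treatment via the qualifying subsequence is a more careful version of the same argument, not a different one.
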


%\paragraph{Conditional calibration.}
% \subsection{Conditional conformal calibration}\label{sec:cond_cal}
% Up until this point, we have been concerned with maintaining a \emph{marginal} guarantee on $\mathbb{P}(\mathcal{G}(X_{n+1}) = \mathcal{F}(X_{n+1}))$, where the randomness is over calibration points $X_{1:n}$ and test point $X_{n+1}$. The advantage of this formulation is that it only requires \emph{unlabeled} calibration data. If labeled data is available, we can calibrate our model with greater precision. In Appendix~\ref{app:cond_cal_res} we explore an extension to Eq.~\eqref{eq:marginalcoverage} that bounds the relative \emph{accuracy},
% \begin{equation}
%   \frac{\mathbb{P}(\mathcal{G}(X_{n+1}) = Y_{n+1})}{\mathbb{P}(\mathcal{F}(X_{n+1}) = Y_{n+1})} \geq 1 - \epsilon, 
% \end{equation}rather than just the prediction \emph{consistency}. For now, however, we focus on the {unlabeled} setting.

\section{Experimental Setup}\label{sec:exp}
For our main results, we use an Albert-xlarge model~\citep{Lan2020ALBERT} with 24 Transformer layers. Results using an Albert-base model and a RoBERTa-large model~\citep{liu2019roberta} are in Appendix~\ref{app:ablation}. 
% As discussed in \S\ref{sec:early_trans}, our methods can be  applied to any multilayered model such as BERT~\cite{devlin-etal-2019-bert}, GPT~\citep{brown2020language}, ResNet~\citep{he2015deep}, and others.
See Appendix~\ref{app:implement} for implementation details.
We did not search across different values for the hyper-parameters of $\mathcal{F}$ or $\mathcal{G}$ as our approach is general and guarantees consistency for any $\mathcal{F}$ with any nonconformity measure (See Appendix~\ref{app:nocomf_measures}). Tuning the hyper-parameters could further improve the efficiency of $\mathcal{G}$ while preserving consistency.

\subsection{Tasks}
We evaluate our methods on three classification tasks with varying label space size $|\mathcal{Y}|$  and difficulty: \textbf{IMDB}~\citep{maas-etal-2011-learning} sentiment analysis on movie reviews, \textbf{VitaminC}~\citep{Schuster2021} fact verification with Wikipedia articles, and \textbf{AG}~\citep{ag_news,zhang2015character} news topic classification. We also evaluate on the \textbf{STS-B}~\citep{cer-etal-2017-semeval} semantic textual similarity regression task where $\mathcal{Y}\in[0,5]\subset \mathbb{R}$. Dataset statistics, along with the test set performance of our original $\mathcal{F}$ model (Albert-xlarge), are contained in Table~\ref{tab:datasets}. 
%Recall that all of our calibration methods with the exception of \S\ref{sec:cond_cal} only require \emph{unlabeled data}.
%Dataset statistics, along with the test performance of our Albert-xlarge $\mathcal{F}$ model, are summarized in Table~\ref{tab:datasets}.

\begin{table}[!t]
    \centering
    \small
    \begin{tabular}{l|ccccc}
        \toprule
        Dataset & $|\mathcal{Y}|$ & Train & Dev. & Test & $\mathcal{F}$ test perf.\\
        \midrule
        IMDB & 2 & 20K & 5K & 25K & 94.0 \\
        VitaminC & 3 & 370K & 10K$^{*}$ & 55K  & 90.6\\
        AG News & 4 & 115K & 5K & 7.6K & 94.4 \\
        STS-B & $\infty$ & 5.7K & 1.5K & 1.4K & 89.8 \\
        \bottomrule
    \end{tabular}
    \vspace{-5pt}
    \caption{Task dataset and label space sizes. The rightmost column reports either test accuracy (classification) or Pearson-correlation (regression). $^{*}$We downsample the 63K public development set to expedite validation.}
    \label{tab:datasets}
    \vspace*{-1.5\baselineskip}
\end{table}

\subsection{Baselines} \label{sec:baselines}
In addition to our main methods discussed in \S\ref{sec:calibration}, we compare to several non-CP baselines. Note that the following methods are not  guaranteed to give well-calibrated performance (as our CP ones are).
\paragraph{Static.} We use the \emph{same} number of layers for all inputs. We choose the exit layer as the first one that obtains the desired consistency on average on $\mathcal{D}_\text{cal}$.
% We note that in finite samples ($|\mathcal{D}_{\mathrm{cal}}| < \infty$), this is not guaranteed to give calibrated performance.

\paragraph{Softmax threshold.} Following \citet{schwartz-etal-2020-right}, we exit on the first layer where $p_k^{\mathrm{max}} \ge 1-\epsilon$, where $p_k^{\mathrm{max}}$ denotes the maximum softmax response of our early classifier. Softmax values are calibrated using temperature scaling~\cite{guo-2017-deep} on another held-out (labeled) data split, $\mathcal{D}_{\mathrm{scale}}$.
% We denote this measure as \textbf{SM}. 

\paragraph{Meta threshold.} Even if perfectly calibrated, $p_k^{\mathrm{max}}$ from softmax thresholding is not measuring \emph{consistency} likelihood $\mathbb{P}(\mathcal{G}(X) = \mathcal{F}(X) \mid X = x)$, but rather $\mathbb{P}(\mathcal{G}(X) = Y \mid X = x)$. This is equivalent if $\mathcal{F}$ is an oracle, but breaks down when $\mathcal{F}$ is not. We also experiment with thresholding the confidence value of our {meta classifier} (\S\ref{sec:meta}) in a similar way (i.e., exiting when it exceeds $1 - \epsilon$).

\subsection{Evaluation}
% For each task, we use a proper training, validation, and test set. We use the training set to learn all nonconformity measures $\mathcal{S}$. We perform model selection specifically for CP on the validation set, and report final numbers on the test set. For all methods, we report the marginalized results over 25 random trials, where in each trial we partition the data into 80\% $\mathcal{D}_{\mathrm{cal}}$ ($x_{1:n}$) and 20\% $\mathcal{D}_{\mathrm{test}}$ ($x_{n+1}$).
% In order to compare the aggregate performance of different methods across all tolerance levels, we plot each metric as a function of $\epsilon$. In all plots, shaded regions show the $16$-$84$th percentiles across trials. We use the following metrics:
For each task, we use a proper training, validation, and test set. We use the training set to learn $\mathcal{F}$ and $\mathcal{G}$. We perform model selection on the validation set, and report final numbers on the test set.
For all methods, we report the marginalized results over 25 random trials, where in each trial we partition the data into 80\% $\mathcal{D}_{\mathrm{cal}}$ ($x_{1:n}$) and 20\% $\mathcal{D}_{\mathrm{test}}$ ($x_{n+1}$).  
In order to compare different methods across all tolerance levels, we plot each metric as a function of $\epsilon$. Shaded regions show the $16$-$84$th percentiles across trials. We report the following metrics:

\paragraph{Consistency.} We measure the percent of inputs for which the prediction of the CAT model $\mathcal{G}$ is the same as the full Transformer on our test prediction, i.e., $\mathcal{G}(X_{n+1}) = \mathcal{F}(X_{n+1})$.
For regression tasks, we count a prediction as consistent if it is within a small margin $\tau$ from the reference (we use $\tau = 0.5$).
%$\mathcal{G}$ is \textit{valid} if it is consistent at least $1 - \epsilon$ of the time (satisfies Eq.~\ref{eq:marginalcoverage}). 
%
As discussed in $\S\ref{sec:introduction}$, if $\mathcal{G}$ is $\epsilon$-consistent, we can also derive an average performance lower bound: it will be at least $(1 - \epsilon) \times \mathcal{F}$'s average performance.\footnote{In practice, the performance is likely to be \emph{higher} than this lower bound, since inconsistencies with $\mathcal{F}$ could lead to a correct prediction when $\mathcal{F}$ would have otherwise been wrong.} 

\paragraph{Layers ($\mathbf{\shortdownarrow}$).} We report the computational cost of the model as the average number of Transformer layers used. Our goal is to improve the efficiency (i.e., use \textit{fewer} layers) while preserving $\epsilon$-consistency. We choose this metric over absolute run-time to allow for implementation-invariant comparisons, but we provide a reference analysis next, to permit easy approximate conversions.

\begin{figure*}[!t]
\small
\centering
\footnotesize
\begin{subfigure}{0.31\textwidth}
\includegraphics[width=1.05\linewidth]{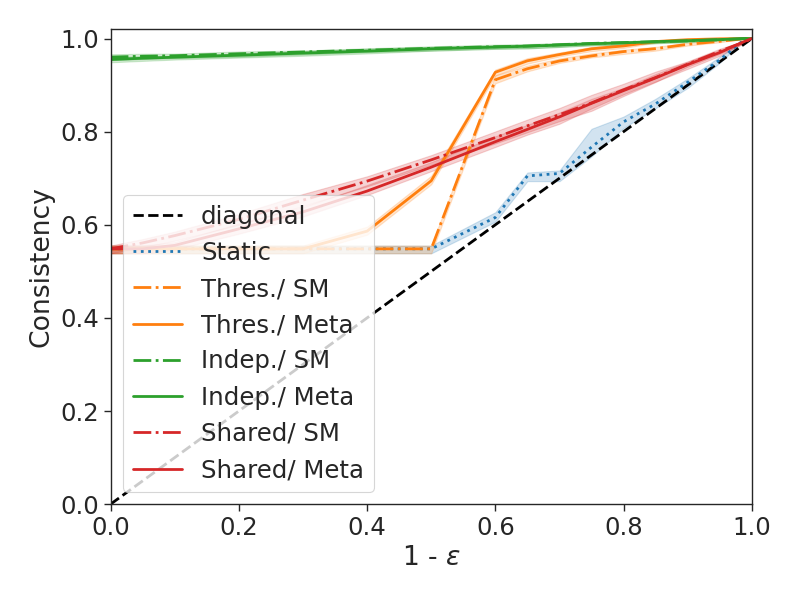} 
\end{subfigure}
%  \hfill
~
\begin{subfigure}{0.31\textwidth}
\includegraphics[width=1.05\linewidth]{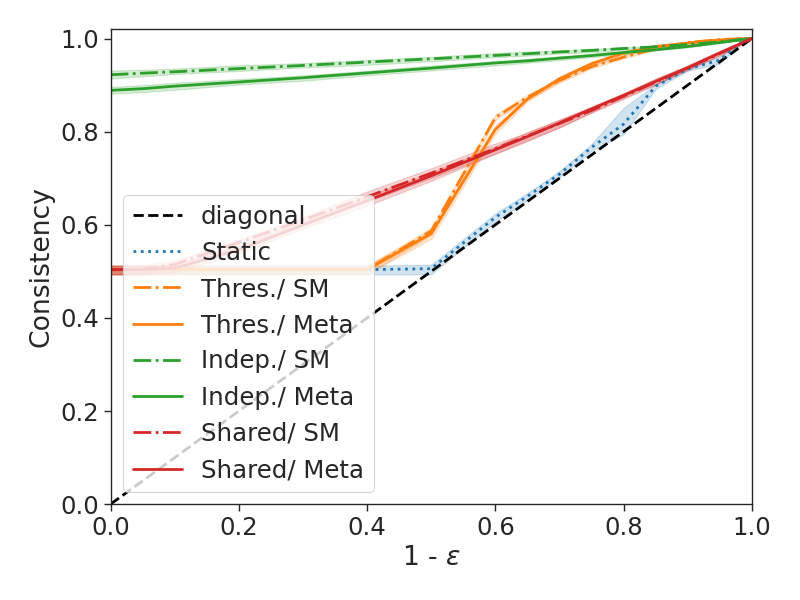}
\end{subfigure}
~
\begin{subfigure}{0.31\textwidth}
\includegraphics[width=1.05\linewidth]{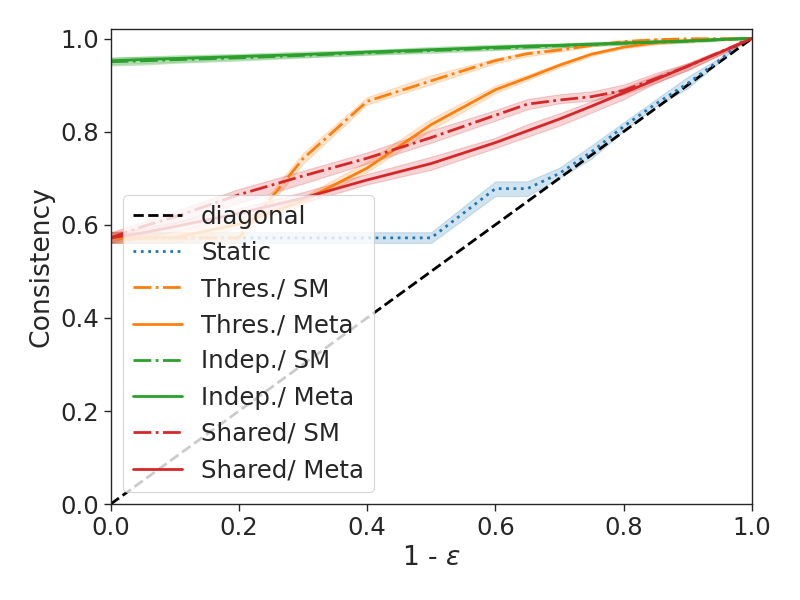} 
\end{subfigure}

\vspace*{-1.2\baselineskip}
\begin{subfigure}{0.31\textwidth}
\includegraphics[width=1.05\linewidth]{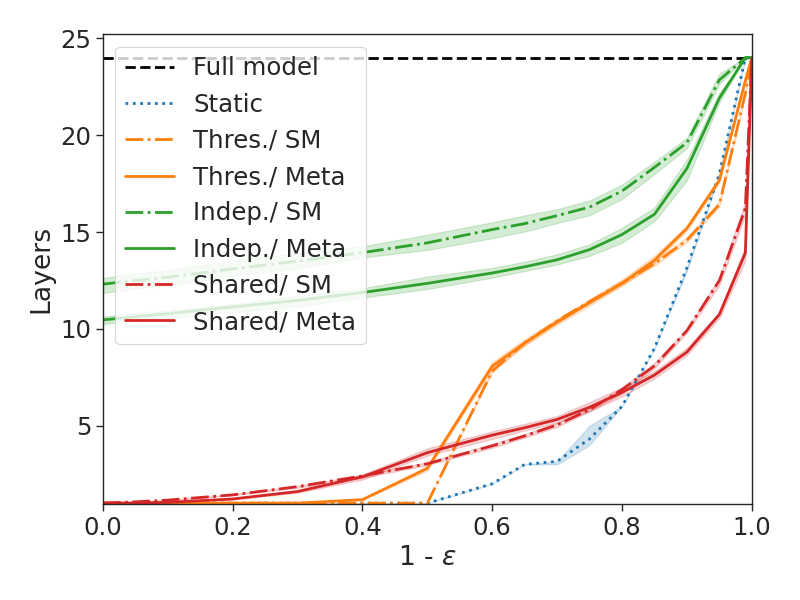} 
\vspace*{-1.8\baselineskip}
\caption{IMDB}
\end{subfigure}
~
\begin{subfigure}{0.31\textwidth}
\includegraphics[width=1.05\linewidth]{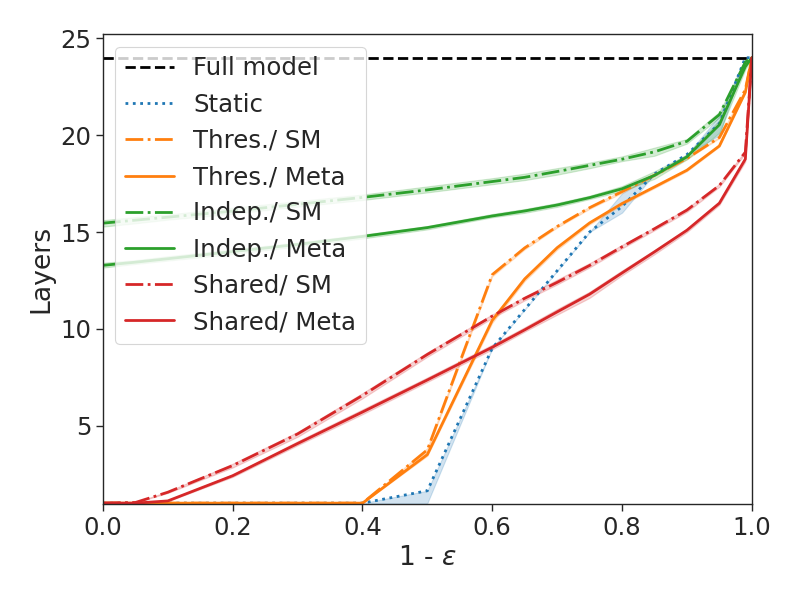}
\vspace*{-1.8\baselineskip}
\caption{VitaminC}
\end{subfigure}
~
\begin{subfigure}{0.31\textwidth}
\includegraphics[width=1.05\linewidth]{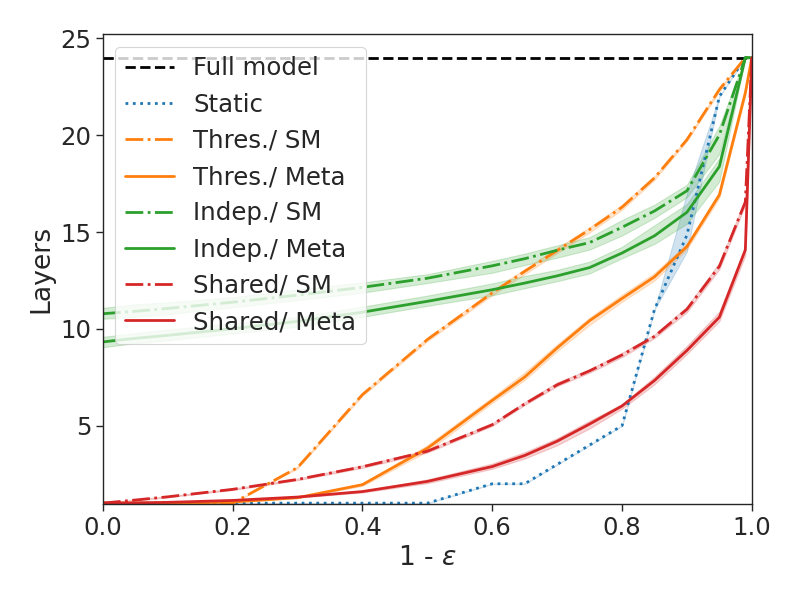} 
\vspace*{-1.8\baselineskip}
\caption{AG News}
\end{subfigure}
\vspace*{-0.5\baselineskip}

\caption{Classification results (dev). While both our CP-based methods give valid consistencies (above diagonal), \emph{shared calibration} generally results in earlier exits. This advantage is especially pronounced at smaller tolerance levels (right-hand side), where it  significantly outperforms other approaches. Our meta-learned confidence measure $\mathcal{M}_k$ improves over using the softmax response as a drop-in replacement, especially for tasks with larger $|\mathcal{Y}|$. Note that we care more about the right-hand side behavior, (i.e., larger $1-\epsilon$), as it corresponds to higher consistency.}
\vspace{-10 pt}
\label{fig:cp_res}
\end{figure*}

%\paragraph{Absolute run-time.}
\subsection{Absolute runtime analysis}
\label{sec:speed_analysis}
The exact run-time of $\mathcal{G}$ depends on the efficiency of the hardware, software, and implementation used. Ideally, the early and meta classifiers can run in parallel with the following Transformer layer (layer $k + 1$). As long as they are faster to compute concurrently than a single layer, this will avoid incurring any additional time cost. An alternative naive synchronous implementation could lead to inefficiencies when using a small tolerance $\epsilon$.

We provide a reference timing for the IMDB task implemented with the Transformers~\citep{wolf-etal-2020-transformers} library, PyTorch 1.8.1~\citep{NEURIPS2019_bdbca288}, and an A100-PCIE-40GB Nvidia GPU with CUDA 11.2. A full forward path of an Albert-xlarge takes $22.32$ms per input, $0.85$ms $\times 24$ for the transformer layers and $1.95$ms for the embedding layer and top classifier. Our early classifier takes $0.20$ms and the meta classifier takes $0.11$ms. Therefore, with a naive implementation, a CAT model $\mathcal{G}$ with an average exit layer less than $17.6$ with the meta classifier, or $19.5$ without, will realize an overall reduction in wall-clock time relative to the full $\mathcal{F}$.

We report example speedup times with the naive implementation in \S\ref{sec:res_time}, as well as an implementation invariant multiply-accumulate operation (MACs) reduction measure. The added computational effort per layer of the early predictor and meta-classifier is marginal (only $66,304$ and $1,920$ MACs, respectively). In comparison, Albert-xlarge with an input length of 256 has $\sim3 \cdot 10^{11}$ MACs.

\section{Experimental Results}
We present our main results. We experiment with both our meta classifier $\mathcal{M}_k$ confidence score (\textbf{Meta}, \S\ref{sec:meta}), and, for classification tasks, the early classifier's softmax response, $p_k^{\mathrm{max}}$ (\textbf{SM}), as a drop-in replacement for $\mathcal{M}_k$ (at no additional computational cost). Appendix \ref{app:ablation} reports results with other drop-in $\mathcal{M}_k$ replacements, in addition to results using our naive development set calibration approach (\S\ref{sec:naive}). Appendix~\ref{app:example_outputs} provides qualitative  examples. %To validate the practical efficacy of our methods, we also compare with non-CP baselines (\S\ref{sec:baselines}), using our Meta score or SM for the threshold-based approach (\textbf{Thres.}).

\subsection{Classification results}

\begin{table*}[!t]
\centering
\small

\begin{tabular}{l|ddd|ddd|ddd}
\toprule
\multicolumn{1}{c|}{Method} &
  \multicolumn{3}{c}{\textbf{IMDB}} &
  \multicolumn{3}{c}{\textbf{VitaminC}} &
  \multicolumn{3}{c}{\textbf{AG News}} \\
  &
  \multicolumn{1}{c}{Consist.} &
  \multicolumn{1}{c}{Acc.} &
  \multicolumn{1}{c}{Layers} &
  \multicolumn{1}{c}{Consist.} &
  \multicolumn{1}{c}{Acc.} &
  \multicolumn{1}{c}{Layers} &
  \multicolumn{1}{c}{Consist.} &
  \multicolumn{1}{c}{Acc.} &
  \multicolumn{1}{c}{Layers} \\
\midrule
\multicolumn{1}{l}{\textit{$\underline{1-\epsilon = 0.95}$}:} & & (88.50) & & & (86.10) & & & (89.02) &  \\[1.2mm]
\rowcolor{Gray}
Static & 95.54 & 92.88 & 18.36 & 95.51 & 89.40 & 21.00 & 95.48 & 93.20 & 22.00 \\
\rowcolor{Gray}
Thres./ SM & 99.65 & 94.01 & 16.55 & 99.83 & 90.59 & 20.07 & 100.00 & 94.44 & 22.28 \\
\rowcolor{Gray}
Thres./ Meta & 99.98 & 93.96 & 17.73 & 99.73 & 90.59 & 19.67 & 99.41 & 94.00 & 16.21 \\
% Indep./ SM & 99.72 & 93.82 & 18.05 & 99.38 & 90.16 & 20.10 & 99.81 & 94.35 & 21.76 \\
Indep./ Meta & 99.66 & 93.82 & 15.69 & 99.07 & 89.97 & 19.60 & 99.81 & 94.31 & 20.58 \\
Shared/ SM & 97.17 & 93.24 & 12.65 & 96.87 & 88.99 & 17.58 & 97.15 & 93.43 & 13.24 \\
Shared/ Meta & 97.15 & 92.71 & \mathbf{10}.\mathbf{83} & 96.91 & 89.01 & \mathbf{16}.\mathbf{79} & 97.08 & 92.50 & \mathbf{10}.\mathbf{17} \\

\midrule

\multicolumn{1}{l}{\textit{$\underline{1-\epsilon = 0.90}$}:} & & (83.84) & & & (81.57) & & & (84.33) &  \\[1.2mm]
\rowcolor{Gray}
Static & 90.82 & 89.47 & 14.00 & 92.57 & 87.80 & 19.00 & 90.88 & 89.10 & 14.00 \\
\rowcolor{Gray}
Thres./ SM & 98.88 & 93.93 & 14.71 & 99.05 & 90.27 & 18.91 & 99.68 & 94.21 & 19.53 \\
\rowcolor{Gray}
Thres./ Meta & 99.75 & 93.86 & 15.30 & 99.10 & 90.31 & 18.45 & 98.90 & 93.82 & 13.50 \\
% Indep./ SM & 99.43 & 93.67 & 17.23 & 98.89 & 89.81 & 19.47 & 99.56 & 94.10 & 18.78 \\
Indep./ Meta & 99.39 & 93.67 & 14.85 & 98.29 & 89.42 & 18.50 & 99.60 & 94.18 & 17.65 \\
Shared/ SM & 94.34 & 91.77 & 10.30 & 93.73 & 87.00 & 16.40 & 94.50 & 92.01 & 10.79 \\
Shared/ Meta & 94.36 & 90.78 & \mathbf{9}.\mathbf{01} & 93.83 & 86.89 & \mathbf{15}.\mathbf{33} & 94.29 & 90.26 & \mathbf{8}.\mathbf{35} \\

\bottomrule                               
\end{tabular}
% \vspace{-5pt}
\caption{Classification results (test) for specific tolerance levels. We report the accuracy lower bound guaranteed by our CP methods in parentheses. Shared/ Meta is reliably the most  efficient method (and is $\epsilon$-consistent). Greyed rows reflect approaches without guarantees; our CAT approaches with guarantees are presented below them.}
\vspace{-10pt}
\label{tab:main_results}
\end{table*}

Figure~\ref{fig:cp_res} summarizes the average consistency and number of layers used by $\mathcal{G}$ as a function of $\epsilon$, while Table~\ref{tab:main_results} presents results for specific $\epsilon$ on task test sets.
%
%Empirically, we confirm that both conformal methods satisfy $\epsilon$-consistency for all $\epsilon$.
% Across tasks, shared calibration provides the most significant efficiency gains---while still maintaining the desired level of consistency. 
Independent calibration proves to be quite conservative due to the loss of statistical power from the loose union bound of the Bonferroni correction for large $l$ (here $l = 24$). At some levels of $\epsilon$, non-CP baselines perform competitively, however, they lack formal guarantees. Overall, for the most critical tolerance levels (small $\epsilon$, right-hand side of the plots), our shared method leads to significant efficiency gains while still maintaining the desired level of consistency (above the diagonal). 
% , as compared to all baselines.

The effectiveness of our meta predictor, $\mathcal{M}_k$, is most pronounced for tasks with $|\mathcal{Y}|>2$, where the drop-in softmax score (SM) becomes less indicative of consistency. Both SM and Meta are relatively well-calibrated for IMDB and VitaminC, which makes the threshold-based exit rule a competitive baseline. Still, our Shared/ Meta method provides both reliable and significant gains.

% test table
%Table~\ref{tab:main_results} shows the results for the test set for two reference tolerance levels ($\epsilon=0.05,0.1$). From the consistency guarantee of our CP methods, we also derive a lower bound on the classification accuracy of $\mathcal{G}$ as it should keep at least $1-\epsilon$ of the correct predictions of $\mathcal{F}$. We report this lower bound per task in parenthesis in the table. In practice, the accuracy is likely to be higher than guaranteed since inconsistencies on mistakes of $\mathcal{F}$ could lead to correct predictions.

% Our meta predictor combined with shared calibration provides a useful tool for reducing the computational cost while effectively controlling the consistency tolerance.
The computational advantage of our CAT model is dependent on the average difficulty of the task and the implementation. As Table~\ref{tab:main_results} shows, allowing up to an $\epsilon$ of 10\% inconsistency, for two of the tasks we cut down the average Transformer layer to only $9$ out of $24$ using our Shared/ Meta model. 
% Following the analysis in \S\ref{sec:speed_analysis}, 
This leads to an approximate speedup of $1.8\times$ with a synchronous implementation and of $2.7\times$ with a concurrent one, compared to running the full model.
Moreover, Figure~\ref{fig:imdb_hist} illustrates the user's control over available computational resources via modulating $\epsilon$. 
%Importantly, $\epsilon$ can be easily modified during inference, without having to retrain any model parameters.
Decreasing $\epsilon$ increases the confidence level required before committing to the early classifier's prediction (thereby increasing the average number of required layers), and vice-versa.

\begin{figure}[!t]
\small
\centering
\footnotesize
\vspace*{-0.8\baselineskip}
\begin{subfigure}{0.36\textwidth}
\includegraphics[width=1.05\linewidth]{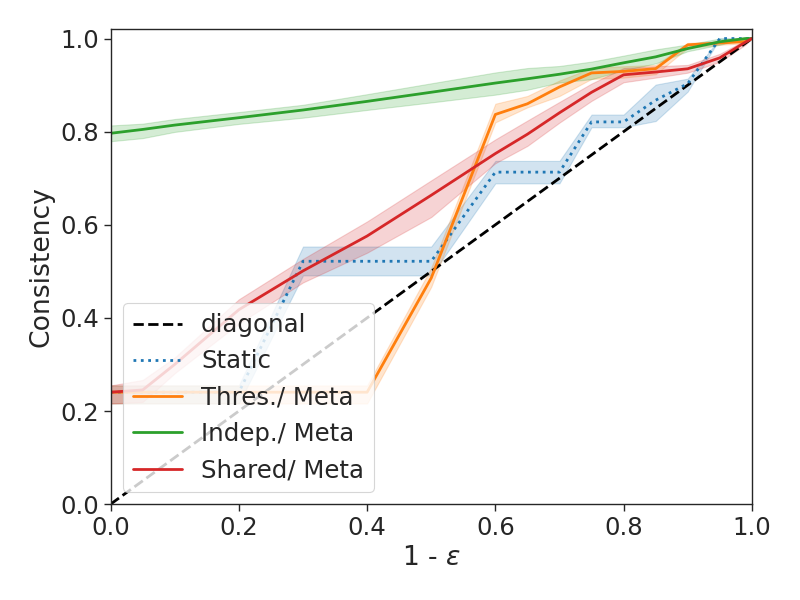} 
\end{subfigure}

\vspace*{-1.3\baselineskip}
\begin{subfigure}{0.36\textwidth}
\includegraphics[width=1.05\linewidth]{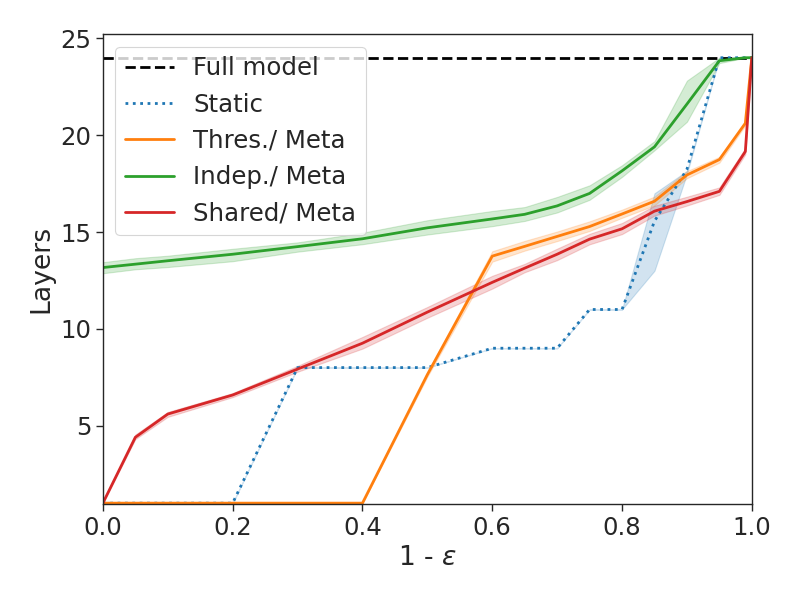} 
\vspace*{-1.8\baselineskip}
\end{subfigure}
\vspace*{-0.5\baselineskip}
\caption{Dev results for the STS-B regression task.}% As in classification (Table~\ref{fig:cp_res}), the advantage of our Shared/ Meta approach is especially pronounced at smaller tolerance levels (right-hand side, which we typically care more about), where it outperforms other approaches.}
%\vspace{-10pt}
\vspace*{-1.2\baselineskip}
\label{fig:cp_reg_res}
\end{figure}

\subsection{Regression results}
Table~\ref{tab:regression_res} and Figure~\ref{fig:cp_reg_res} present results for our regression task, where we see similar trends. Here, an attractive advantage of our meta confidence predictor is its generalizability to multiple task output types. Notice that the event space of $\mathbf{1}\{\mathcal{G}(X) = \mathcal{F}(X)\} = \{0, 1\}$ always, regardless of the original $\mathcal{Y}$.\footnote{As long as equality is  suitably defined, e.g., for STS-B we define consistent outputs as being within $\tau=0.5$ away.} This allows it to be easily adapted to tasks beyond classification, such as regression, where traditional softmax-based confidence measures (as used in, e.g., \citet{schwartz-etal-2020-right}) are absent.
%One attribute of interest of our meta confidence predictor, is its self-contained nature. This allows an easy adoption to other tasks, such as regression, where softmax-based confidence measures are absent. Figure~\ref{fig:cp_reg_res} depicts the results for the STSB regression task of predicting a similarity score for a pair of sentence on a $0-5$ scale. We consider two predictions as consistent if they are within a $\tau=0.5$ interval. 

%The static baseline becomes ineffective for small tolerance levels and reduces to using the full model. Our meta measure provides an effective confidence estimate. Thresholding directly on the meta value, however, can violate validity (consistency under the diagonal) as it is not provably calibrated. Similar to the classification results, our shared method provides both provable guarantees and improved empirical results for small tolerance values.

%Table~\ref{tab:regression_res} reports the results for STSB's test set for two values of $\epsilon$.

\begin{figure*}[!t]
    \centering
    \includegraphics[width=1\linewidth]{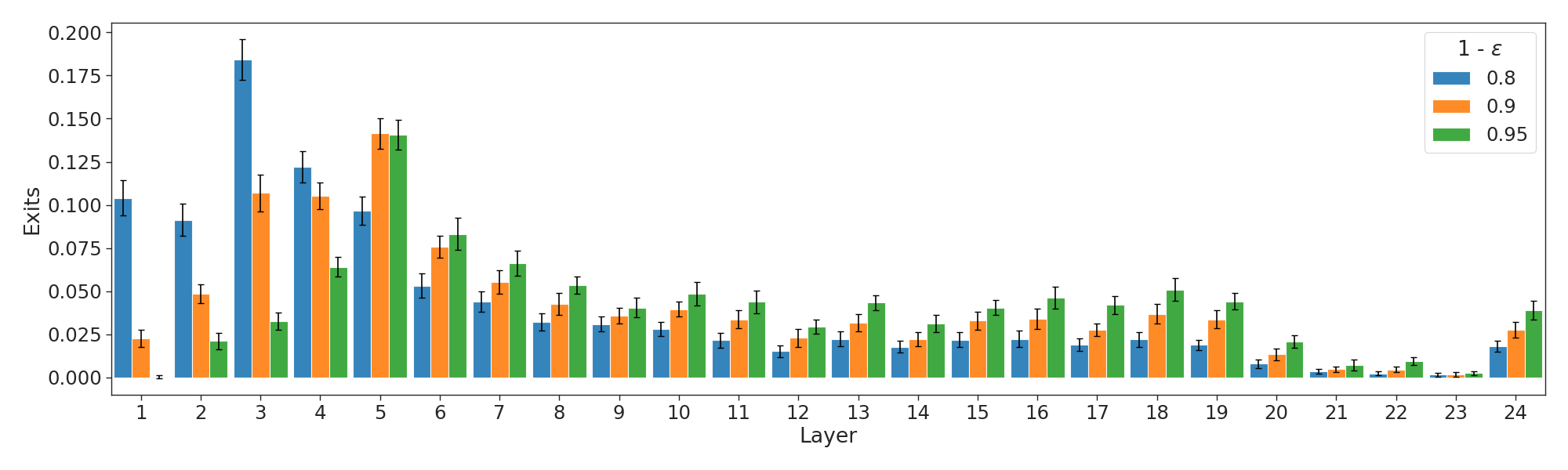} 
    \vspace{-27pt}
    \caption{Distribution of exit layers per tolerance level $\epsilon$ for the IMDB task (dev set) with Shared/ Meta. Larger $\epsilon$ allows the CAT model to shift its predictions earlier by permitting for more inconsistencies with the full model $\mathcal{F}$.}
    \label{fig:imdb_hist}
    %\vspace*{-10pt}
\end{figure*}

\begin{table}[!t]
\centering
\small
% \setlength{\tabcolsep}{10pt}
% \vspace{-7pt}
% \vspace*{-0.8\baselineskip}
\begin{tabular}{l|dd}
\toprule
\multicolumn{1}{c|}{Method} &
  \multicolumn{1}{c}{Consist.} &
  \multicolumn{1}{c}{Layers} \\
\midrule
\multicolumn{1}{l}{\textit{$\underline{1-\epsilon = 0.95}$}:}  \\
\rowcolor{Gray}
Static & 100.00 & 24.00  \\
\rowcolor{Gray}
Thres./ Meta & 99.87 & 19.19  \\
Indep./ Meta & 99.29 & 23.60  \\
Shared/ Meta & 96.42 & \mathbf{17}.\mathbf{64}  \\
\midrule

\multicolumn{1}{l}{\textit{$\underline{1-\epsilon = 0.90}$}:} \\
\rowcolor{Gray}
Static & 92.51 & 20.00  \\
\rowcolor{Gray}
Thres./ Meta & 99.19 & 18.53  \\
Indep./ Meta & 97.77 & 20.26  \\
Shared/ Meta & 92.65 & \mathbf{17}.\mathbf{29}  \\

\bottomrule                               
\end{tabular}
% \vspace{-5pt}
\caption{Test results for the STS-B regression task.}% Greyed rows reflect approaches without guarantees.}
\vspace{-10pt}
\label{tab:regression_res}
\end{table}

\begin{table*}[!t]
\centering
\small
\vspace{-4pt}

\begin{tabular}{l|dddd|dddd}
\toprule
\multicolumn{1}{c|}{Method} &
  \multicolumn{4}{c|}{\textbf{Amortized time ($100\cdot T_{\mathcal{G}} / T_{\mathcal{F}}$})} &
  \multicolumn{4}{c}{\textbf{MACs reduction} ($|\mathcal{F}| / |\mathcal{G}|$)} \\
  \multicolumn{1}{c|}{} & 
  \multicolumn{1}{c}{IMDB} &
  \multicolumn{1}{c}{VitaminC} &
  \multicolumn{1}{c}{AG News} &
  \multicolumn{1}{c|}{STS-B} &
  \multicolumn{1}{c}{IMDB} &
  \multicolumn{1}{c}{VitaminC} &
  \multicolumn{1}{c}{AG News} &
  \multicolumn{1}{c}{STS-B} \\
\midrule
% \multicolumn{1}{l}{\textit{$\underline{1-\epsilon = 0.95}$}:} &  \\[1.2mm]
% \rowcolor{Gray}
% Thres./ SM & 
% 85.56   &	102.12  &  	112.52  &	\multicolumn{1}{c|}{\text{N/A}}       &   1.45    &	1.20    &	1.08    &	\multicolumn{1}{r}{\text{N/A}}   \\
% \rowcolor{Gray}
% Thres./ Meta & 
% 99 .85   &	109.93  &  	91.95   &	107.44   &	1.35    &	1.22    &	1.48    &	1.25    \\ 
% Indep./ Meta &
% 89.25   &	109.57  &  	114.66  &	130.36  &	1.53    &	1.22    &	1.17    &	1.02    \\
% Shared/ SM &
% 67.22   &	\textbf{90}.\textbf{41}   &  	69.99   &	\multicolumn{1}{c|}{\text{N/A}}      &	1.90    &	1.37    &	1.81    &	\multicolumn{1}{r}{\text{N/A}}       \\
% Shared/ Meta &
% \textbf{63} .\textbf{99} \ \%   &	94.97 \ \%   &  	\textbf{60}.\textbf{56} \ \%   &	\textbf{99}.\textbf{38} \ \%   &	\times \textbf{2}.\textbf{22}    &	\times \textbf{1}.\textbf{43}    &	\times \textbf{2}.\textbf{36}    &	\times \textbf{1}.\textbf{36}   \\

% \midrule
% \multicolumn{1}{l}{\textit{$\underline{1-\epsilon = 0.90}$}:} &  \\[1.2mm]
\rowcolor{Gray}
Thres./ SM & 
76.91   &	96.66   &	99.58   &	\multicolumn{1}{c|}{\text{N/A}}       &   1.63    &	1.27    &	1.23    &	\multicolumn{1}{r}{\text{N/A}}   \\
\rowcolor{Gray}
Thres./ Meta & 
87.22   &	103.59  &	77.87   &	104.01  &	1.57    &	1.30    &	1.78    &	1.30    \\
Indep./ Meta &
84.88   &	103.85  &	99.44   &	113.00  &	1.62    &	1.30    &	1.36    &	1.18    \\
Shared/ SM &
56.16   &	\textbf{84}.\textbf{86}   &	58.47   &	\multicolumn{1}{c|}{\text{N/A}}       &	2.33    &	1.46    &	2.22    &	\multicolumn{1}{r}{\text{N/A}}   \\
Shared/ Meta &
\textbf{54}.\textbf{53} \ \%   &	87.38  \ \%  &	\textbf{51}.\textbf{10} \ \%   &	\textbf{97}.\textbf{56} \ \%   &	\times \textbf{2}.\textbf{66}    &	\times \textbf{1}.\textbf{57}    &	\times \textbf{2}.\textbf{87}    &	\times \textbf{1}.\textbf{39}    \\

\bottomrule                               
\end{tabular}
\caption{Reference time speedup and model complexity reduction for $1-\epsilon=0.90$ (see Table~\ref{tab:speedup_results_app} for 0.95). We compute the amortized time with the naive synchronous implementation (\S\ref{sec:speed_analysis}). A more efficient implementation can further reduce the time of $\mathcal{G}$. The MACs reduction measure is implementation agnostic and expresses the ratio of computational effort saved by $\mathcal{G}$. Our CAT models (non-greyed lines) not only guarantee $1-\epsilon$ consistency with $\mathcal{F}$, but are also significantly more efficient in practice when using Shared calibration.}

\vspace{-9pt}
\label{tab:speedup_results}
\end{table*}

\subsection{Example efficiency gains} \label{sec:res_time}
% \textcolor{red}{TODO describe table and define MACs}
Following the analysis in \S\ref{sec:speed_analysis}, we compute the amortized inference time with a naive implementation and report its percentage out of the full model. As Table~\ref{tab:speedup_results} shows, our Shared calibration is the most efficient method on all four tasks.
For tasks with many easy inputs (IMDB and AG News), our Shared/ Meta method can save $45\%$ - $49\%$ of the inference time when $1-\epsilon=0.90$. 
Unsurprisingly, the absolute speedup is less significant for harder tasks, but increases with higher tolerance levels.
% In the harder VitaminC and STS-B tasks, the speedup is smaller since the sufficient confidence arrives on average at later layers. 

On VitaminC, even though the Meta measure allows exiting on earlier layers, its additional meta classifiers result in slightly slower inference on average at this tolerance level, compared to our Shared/ SM. With a more efficient concurrent implementation, the Meta measure will be favorable. 

We also compute the MACs reduction metric which is independent of the specific implementation or hardware and shows the number of multiply-accumulate operations of the full model compared to our CAT model. As demonstrated in Table~\ref{tab:speedup_results}, our Shared/ Meta method is most effective in reducing the computational effort across all tasks for the two examined tolerance levels.

\section{Conclusion}\label{sec:conclusion}
The ability to make predictions quickly without excessively degrading performance is critical to production-level machine learning systems. 
In fact, being capable of quantifying the uncertainty in a prediction and deciding when additional computation is needed (or not) is a key challenge for any intelligent system (e.g., see the System 1 vs. System 2 dichotomy explored in \citet{kahneman2011thinking}). %of .. This tension is described in the system 1 vs.\ system 2 dichotomy of \citet{kahneman2011thinking} for human's behaviors. System 1 is fast and relies on intuitions while system 2 is slow and effortful but more accurate. A key dilemma of system 2 is in deciding when to override the intuitive prediction of system 1 and to allocate additional resources.
%\emph{when} to be certain enough in a prediction and when an additional effort is needed, is a crucial challenge for any intelligent system. This tension is described in the system 1 vs.\ system 2 dichotomy of \citet{kahneman2011thinking} for human's behaviors. System 1 is fast and relies on intuitions while system 2 is slow and effortful but more accurate. A key dilemma of system 2 is in deciding when to override the intuitive prediction of system 1 and to allocate additional resources.

In this work, we addressed the crucial challenge of deciding \emph{when} to sufficiently trust an early prediction of Transformer-based  models by learning from their past predictions. Our Confident Adaptive Transformers (CATs) framework leverages \emph{meta predictors} to accurately assess whether or not the prediction of a simple, early classifier trained on an intermediate Transformer representation is likely to already be consistent with that of the \emph{full} model $\mathcal{F}(X)$ (i.e., after all $l$ layers of $\mathcal{F}$ are computed). Importantly, we develop a new conformal prediction approach for calibrating the confidence of the meta classifier that is (1) simple to implement, (2) fast to compute alongside the Transformer, (3) requires only \emph{unlabeled} data, and (4) provides statistically efficient marginal guarantees on the event that the prediction of the faster, amortized CAT model is \emph{consistent} with that of the full $\mathcal{F}$. %the \emph{consistency} of the amortized, faster CAT model with the full one.
Our results on multiple tasks demonstrate the generality of our approach, and its effectiveness in consistently improving computational efficiency---all while maintaining a reliable margin of error. 
%\nopagebreak
%performance metrics. % that in reducing the average computational effort by dynamically allocating the model's complexity and provably maintaining consistency.

%the likelihood that an early prediction is yet to be consistent with the full model.include early classifiers and meta predictors to accurately assess the likelihood that an early prediction is yet to be consistent with the full model. Importantly, we develop a new conformal prediction approach for calibrating the confidence measure to provide marginal guarantees on the consistency of the amortized, faster model with the full one. Our results demonstrate the effectiveness of our method in reducing the average computational effort by dynamically allocating the model's complexity and provably maintaining consistency.

% TODO: uncomment
\section*{Acknowledgements}
We thank the MIT NLP group for helpful discussions.
TS is supported in part by DSO grant DSOCL18002. AF is supported in part by an NSF GRFP. This work is also supported in part by DSTA award DST00OECI20300823.

\bibliography{anthology,emnlp2020}
\bibliographystyle{acl_natbib}
% \clearpage
\appendix
\counterwithin{figure}{section}
\counterwithin{table}{section}
\section{Proofs}
\label{app:proofs}

We first state the following useful lemma on inflated sample quantiles.

\begin{lemma}
\label{lemma:quantile}
Let $\mathrm{Quantile}(\alpha; F)$ denote the $\alpha$ quantile of distribution $F$. Let $V_{1:n}$ denote the empirical distribution over random variables $\{V_1, \ldots, V_n\}$. Furthermore, assume that $V_i$, $i=1,$ $\ldots, n+1$ are exchangeable.
Then for any $\alpha \in (0, 1)$,  we have $\mathbb{P}\left(V_{n+1} \leq \mathrm{Quantile}(\alpha, V_{1:n} \cup \{\infty\})\right) \geq \alpha.$
\end{lemma}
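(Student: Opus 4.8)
\textbf{Proof plan for Lemma~\ref{lemma:quantile}.}
The plan is to reduce the statement to the standard fact that, among exchangeable random variables, the rank of any one of them is (almost) uniform. First I would fix $\alpha \in (0,1)$ and write $m := \lceil \alpha(n+1)\rceil$, so that $\mathrm{Quantile}(\alpha, V_{1:n}\cup\{\infty\})$ is, by definition, the $m$-th smallest value of the $n+1$ numbers $V_1,\dots,V_n,\infty$. The event $\{V_{n+1} \le \mathrm{Quantile}(\alpha, V_{1:n}\cup\{\infty\})\}$ then holds exactly when $V_{n+1}$ is among the $m$ smallest of $V_1,\dots,V_{n+1},\infty$; since $\infty$ is strictly the largest (and when several $V_i$ are tied we may break ties by index so that $V_{n+1}$ never loses a comparison it could win), this is implied by the event that the rank of $V_{n+1}$ within $V_1,\dots,V_{n+1}$ is at most $m$.

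The key step is then the exchangeability argument: because $V_1,\dots,V_{n+1}$ are exchangeable, each of them is equally likely to occupy any given position in the sorted order, so $\mathbb{P}(\mathrm{rank}(V_{n+1}) \le j) \ge j/(n+1)$ for every integer $j$ (with equality when there are no ties). Applying this with $j = m = \lceil \alpha(n+1)\rceil$ gives
\begin{equation*}
\mathbb{P}\big(V_{n+1} \le \mathrm{Quantile}(\alpha, V_{1:n}\cup\{\infty\})\big) \ge \frac{\lceil \alpha(n+1)\rceil}{n+1} \ge \alpha.
\end{equation*}
The inclusion of $\{\infty\}$ in the pool is what makes this work cleanly: it guarantees that the $m$-th smallest value of the inflated set of size $n+1$ is well-defined and is at least as large as the $m$-th smallest of $V_1,\dots,V_{n+1}$, so bounding the rank of $V_{n+1}$ among the $V$'s suffices.

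The main obstacle is handling ties correctly: if several of the $V_i$ can be equal with positive probability, one must be careful that ``$V_{n+1} \le$ the $m$-th order statistic'' is still implied by ``$\mathrm{rank}(V_{n+1}) \le m$'' under a consistent tie-breaking convention, and that the uniform-rank bound $\mathbb{P}(\mathrm{rank}(V_{n+1}) \le j) \ge j/(n+1)$ survives ties (it does, since ties only help the inequality). I would state the tie-breaking convention explicitly (e.g., break ties by original index, with $V_{n+1}$ treated as the largest index) and note that this convention only makes the target event larger, so the bound is conservative and hence valid. Everything else is bookkeeping about the definition of the sample quantile.
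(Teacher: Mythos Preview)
Your proposal is correct and follows essentially the same route as the paper's proof: both reduce the event to ``$V_{n+1}$ is among the $\lceil \alpha(n+1)\rceil$ smallest of $V_{1:n+1}$'' and then invoke exchangeability to get probability at least $\lceil \alpha(n+1)\rceil/(n+1)\ge \alpha$. The paper phrases the reduction as the equivalence $V_{n+1}\le \mathrm{Quantile}(\alpha; V_{1:n}\cup\{\infty\}) \Leftrightarrow V_{n+1}\le \mathrm{Quantile}(\alpha; V_{1:n+1})$ (since points above the quantile can be swapped without changing it), whereas you phrase it via ranks and the fact that $\infty$ is always largest; these are the same observation. One small wobble: your two tie-breaking remarks point in opposite directions (``$V_{n+1}$ never loses a comparison it could win'' versus ``$V_{n+1}$ treated as the largest index''); the cleanest fix is to bypass tie-breaking entirely and use the averaging argument $\sum_{j=1}^{n+1}\mathbf{1}\{V_j\le V_{(m)}\}\ge m$ together with exchangeability, which is what the paper implicitly does.
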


\begin{proof}
This is a well-known result. Given support points $v_1, \ldots, v_n \in \mathbb{R}$ for a discrete distribution $F$, let $q = \mathrm{Quantile}(\alpha; F)$. Any points $v_i > q$ do not affect this quantile, i.e., if we consider a new distribution $\tilde{F}$ where all points $v_i > q$ are mapped to arbitrary values also larger than $q$ then $\mathrm{Quantile}(\alpha; F) = \mathrm{Quantile}(\alpha; \tilde{F})$. Accordingly, for the exchangeable $V_i$, we have 
\begin{align*}
    V_{n+1} &> \mathrm{Quantile}(\alpha; V_{1:n} \cup \{\infty\}) \Longleftrightarrow \\
    &V_{n+1} > \mathrm{Quantile}(\alpha; V_{1:(n+1)}).
\end{align*}
Equivalently, we also have that
\begin{align*}
    V_{n+1} &\leq \mathrm{Quantile}(\alpha; V_{1:n} \cup \{\infty\}) \Longleftrightarrow \\
    &V_{n+1} \leq \mathrm{Quantile}(\alpha; V_{1:(n+1)}).
\end{align*}
Given the discrete distribution over the $n+1$ variables $V_i$, $V_{n+1} \leq \mathrm{Quantile}(\alpha; V_{1:(n+1)})$ implies that $V_{n+1}$ is among the $\lceil\alpha(n+1)\rceil$ smallest of $V_{1:(n+1)}$.
By exchangeability, this event occurs with probability at least $\frac{\lceil\alpha(n+1)\rceil}{n+1} \geq \alpha$. \end{proof}

\subsection{Proof of Proposition~\ref{prop:naive}}
\begin{proof}
 This result is based on Clopper-Pearson confidence interval for Binomial random variables~\cite{clopperpearson}. As the binary events $\mathbf{1}\{\mathcal{G}(X_i; \bm{\tau}) = \mathcal{F}(X_i)\}$ are i.i.d., the sum $s$ is Binomial. Directly applying a one-sided Clopper-Pearson lower bound on the true success rate, $\mathbb{P}(\mathcal{G}(X_i; \bm{\tau}) = \mathcal{F}(X_i))$, gives the result.
\end{proof}

\subsection{Proof of Proposition~\ref{prop:early_exit}}
\begin{proof}
We prove by simple calculation using the property assumed in Eq.~\eqref{eq:subset}.
\begin{align*}
    \mathbb{P}(\mathcal{F}_K(&X_{n+1}) = \mathcal{F}(X_{n+1})) \\
    &= \mathbb{P}(\min \cset^c(X_{n+1}) \in \mathcal{I}^c(X_{n+1})) \\
    &\geq \mathbb{P}(\cset^c(X_{n+1}) \subseteq \mathcal{I}^c(X_{n+1})) \\
    &= \mathbb{P}(\mathcal{I}(X_{n+1}) \subseteq \cset(X_{n+1})) \\
    &\geq 1 - \epsilon.
\end{align*}
\end{proof}

\subsection{Proof of Theorem~\ref{thm:ind}}
\begin{proof}
 For a given $k$, let $V_k^{(i)} := \mathcal{M}_k(X_i)$ denote the random meta confidence values used for calibration, and $V_k^{(n+1)} := \mathcal{M}_k(X_{n+1})$ the random test point. For all $k$, $\mathcal{M}_k$ is trained and evaluated on separate data ($\mathcal{D}_{\mathrm{meta}}$ vs $\mathcal{D}_{\mathrm{cal}} \cup \mathcal{D}_{\mathrm{test}}$), preserving exchangeability. Therefore, as $X_{1:n+1}$ are exchangeable, then $V_{k}^{(1:n+1)}$ are also exchangeable.
 
 Layer $k$ is included in $\cset^{\mathrm{ind}}$ iff $V_k^{(n+1)} \leq \mathrm{Quantile}(1 - \alpha_k, V_k^{(1:n)} \cup \{\infty\})$. For a given $k$, this happens with probability at least  $1 - \alpha_k$ by Lemma~\ref{lemma:quantile}. Taken over all $k \in \mathcal{I}(X_{n+1})$ where $|\mathcal{I}(X_{n+1})|$ is at most $l - 1$ (i.e., \emph{all} early layers are inconsistent), we have
 \begin{align*}
     \mathbb{P}(\mathcal{I}(&X_{n+1}) \subseteq \cset^{\mathrm{ind}}(X_{n+1})) \\
     &= 1 - \mathbb{P}\Big(\bigcup_{k \in \mathcal{I}} \{k \not\in \cset^{\mathrm{ind}}(X_{n+1})\}\Big) \\
     &\geq 1 - \sum_{k \in \mathcal{I}} \mathbb{P}(k \not\in \cset^{\mathrm{ind}}(X_{n+1}) \\
     &= 1 - \sum_{k \in \mathcal{I}} \alpha_k \\
     &\geq 1 - \epsilon.
 \end{align*}
The last inequality is given by the Bonferroni constraint, i.e., $\alpha_k = \omega_k \cdot \epsilon$, where $\sum_{i=1}^{l -1}\omega_i = 1$
\end{proof}

\subsection{Proof of Theorem~\ref{thm:share}}
\begin{proof}
 By the same argument as Theorem~\ref{thm:ind}, the meta scores $\mathcal{M}_k(X_i)$ are exchangeable. Since $\mathcal{M}_{\mathrm{max}}$ operates symmetrically across all $X_i$, $M^{(i)} = \mathcal{M}_{\mathrm{max}}(X_i)$ are also exchangeable.
 
 Let $M^{(n+1)}$ denote the maximum meta score across inconsistent layers for the new test point. By Lemma~\ref{lemma:quantile}, this falls below $\mathrm{Quantile}(1 - \epsilon,M^{(1:n)} \cup \{\infty\})$ with probability at least $1 - \epsilon$. Since $M^{(n+1)}$ reflects the maximum meta score, this entails that the meta scores of all other inconsistent layers $k \in \mathcal{I}(X_{n+1})$ for $X_{n+1}$ will be below $\mathrm{Quantile}(1 - \epsilon,M^{(1:n)} \cup \{\infty\})$ if $M^{(n+1)}$ is, and thereby be included in $\cset^\mathrm{share}(X_{n+1})$. This gives the bound in Eq.~\eqref{eq:subset}.
 \end{proof}

% \subsection{Proof of Corollary~\ref{cor:cond}}
% \begin{proof}
%  It suffices to show that the filter $\mathcal{A} := \{X_i \colon \mathcal{F}(X_i) = Y_{i}\}$ produces exchangeable samples $X_j \in \mathcal{A}$. The condition $\mathcal{F}(X_i) = Y_i$ is symmetric across all $X_i$, where $\mathcal{F}$ is fixed. Therefore the subset $\mathcal{A}$ is also exchangeable conditioned on $\mathcal{F}(X_i) = Y_{i}$. The setting then reduces to a straightforward application of Proposition~\ref{prop:early_exit}.
% \end{proof}
\begin{table*}[!t]
\renewcommand\thetable{C.1}
\centering
\small
\resizebox{2\columnwidth}{!}{%

\begin{tabular}{c|ddd|ddd|ddd}
\toprule
\multicolumn{1}{c}{Nonconformity} &
  \multicolumn{3}{c}{\textbf{IMDB}} &
  \multicolumn{3}{c}{\textbf{VitaminC}} &
  \multicolumn{3}{c}{\textbf{AG News}} \\
  \multicolumn{1}{c}{measure}&
  \multicolumn{1}{c}{Consist.} &
  \multicolumn{1}{c}{Bound} &
%   \multicolumn{1}{c}{Acc.} &
  \multicolumn{1}{c}{Layers} &
  \multicolumn{1}{c}{Consist.} &
  \multicolumn{1}{c}{Bound} &
%   \multicolumn{1}{c}{Acc.} &
  \multicolumn{1}{c}{Layers} &
  \multicolumn{1}{c}{Consist.} &
  \multicolumn{1}{c}{Bound} &
%   \multicolumn{1}{c}{Acc.} &
  \multicolumn{1}{c}{Layers}
  \\
\midrule
% \multicolumn{2}{l}{\textit{$\underline{1-\epsilon = 0.95}$}:} & & (88.50) & & & & (85.17) & & & & (89.02) & \\[1.2mm]
\multicolumn{2}{l}{\textit{$\underline{1-\epsilon = 0.95}$}:} &  & & &  & & &  & \\[1.2mm]
SM & 95.16 & 93.74 & 10.39 & 94.84 & 94.04 & 16.60 & 95.02 & 93.75 & 11.63 \\
Meta & 94.96 & 93.72 & 9.13 & 94.93 & 94.12 & 15.60 & 94.86 & 93.58 & 9.37 \\
\midrule
\multicolumn{2}{l}{\textit{$\underline{1-\epsilon = 0.9}$}:} & &  & & &  & & & \\[1.2mm]
SM &  90.22 & 88.30 & 7.35 & 89.85 & 88.59 & 14.93 & 89.72 & 88.01 & 8.98 \\
Meta &  90.19 & 88.36 & 7.13 & 90.00 & 88.70 & 13.67 & 90.14 & 88.48 & 6.85 \\

\bottomrule                               
\end{tabular}
}%

\caption{Results (dev) using the naive development set calibration method (see \S\ref{sec:naive}). This method tunes the early exit thresholds to get efficient $\epsilon$-consistent predictions on a development set, but does not guarantee that prediction will be $\epsilon$-consistent on new data. ``Consist.'' measures the empirical consistency on a test set, from which we compute a guaranteed lower bound (``Bound'') to 99\% confidence.  The bound is significantly lower than our target $1-\epsilon$, and the measured consistency in our experiments also falls slightly bellow $1-\epsilon$ in some cases.}
\label{tab:test_bound}
\end{table*}

\section{Implementation Details}\label{app:implement}

We implement our early exit Transformers (\S\ref{sec:early_trans}) on top of the Transformers library~\citep{wolf-etal-2020-transformers}.\footnote{As discussed in \S\ref{sec:early_trans}, our methods can also be  applied to any multilayered model such as BERT~\cite{devlin-etal-2019-bert}, GPT~\citep{brown2020language}, ResNet~\citep{he2015deep}, and others.} We set $d_e$ to 32 in our experiments. For each task we fix a pre-trained $\mathcal{F}$ and train the early and meta classifiers. 
We reuse the same training data that was used for $\mathcal{F}$ and divide it to 70/10/20\% portions for $\mathcal{D}_{\text{tune}},\mathcal{D}_{\text{scale}}$ and $\mathcal{D}_{\text{meta}}$, respectively.
For classification tasks, we add the temperature scaling step~\citep{pmlr-v70-guo17a} after the early training to improve the calibration of the softmax. We run the scaling for 100 steps on $\mathcal{D}_{\text{scale}}$ using an Adam optimizer~\citep{kingma2017adam} with a learning rate of $10^{-3}$. For the early and meta training we use the same optimizer as for $\mathcal{F}$. 

We fix $\mathcal{F}$ rather than train it jointly with the new components of $\mathcal{G}$ to avoid 
 any reduction in $\mathcal{F}$'s performance~\citep{xin-etal-2020-deebert}. This also makes our method simple to train over any existing Transformer without having to retrain the whole model which could be very costly. Training all parameters of $\mathcal{G}$ jointly can lead to more efficient inference as the early representations will be better suited for classification~\citep{schwartz-etal-2020-right, geng2021romebert}, but potentially with the cost of reducing the accuracy of $\mathcal{F}_l$. In the case of joint training, our CATs will provide consistency guarantees with respect to the jointly-trained $\mathcal{F}_l$.
 
 We implement the conformal calibration process in Python and perform retrospective analysis with different random splits of  $\mathcal{D}_{\text{cal}}$ and  $\mathcal{D}_{\text{test}}$. For Theorem~\ref{thm:ind}, we simply use the uniform 
 Bonferroni correction, setting $w_k = \frac{1}{l-1} \quad \forall k$. 
 For the naive development set calibration, we use a shared threshold across all layers in order to reduce the examined solution space in Equation~\ref{eq:dev_set_thres}.

\section{Additional Results}\label{app:ablation}

In this section, we provide complementary results for the experiments in the main paper. All results, except for sections \ref{app:base_res} and \ref{app:roberta_res}, are with an Albert-xlarge model as $\mathcal{F}$, similar to the main paper. However, we note that the results in these tables are based on the development sets, while the tables in the main paper report the test set results.

% \subsection{Regression development results}\label{app:reg_dev_res}
% Figure~\ref{fig:cp_reg_res} shows the development results on the regression STSB task. 

% \begin{figure}[!t]
% \small
% \centering
% \footnotesize
% \begin{subfigure}{0.4\textwidth}
% \includegraphics[width=1.05\linewidth]{figures/stsb/const.png} 
% \end{subfigure}

% \vspace*{-1.2\baselineskip}
% \begin{subfigure}{0.4\textwidth}
% \includegraphics[width=1.05\linewidth]{figures/stsb/size.png} 
% % \vspace*{-1.8\baselineskip}
% \end{subfigure}
% \vspace*{-0.5\baselineskip}
% \caption{Regression results for STSB (dev).}
% % \vspace{-10pt}
% \vspace*{-1\baselineskip}
% \label{fig:cp_reg_res}
% \end{figure}

\begin{table*}[!t]
\renewcommand\thetable{C.2}
\centering
\small

\begin{tabular}{l|ddd|ddd|ddd}
\toprule
\multicolumn{1}{c}{Nonconformity} &
  \multicolumn{3}{c}{\textbf{IMDB}} &
  \multicolumn{3}{c}{\textbf{VitaminC}} &
  \multicolumn{3}{c}{\textbf{AG News}} \\
 \multicolumn{1}{c}{measure} &
  \multicolumn{1}{c}{Consist.} &
  \multicolumn{1}{c}{Acc.} &
  \multicolumn{1}{c}{Layers} &
  \multicolumn{1}{c}{Consist.} &
  \multicolumn{1}{c}{Acc.} &
  \multicolumn{1}{c}{Layers} &
  \multicolumn{1}{c}{Consist.} &
  \multicolumn{1}{c}{Acc.} &
  \multicolumn{1}{c}{Layers} \\
\midrule
\multicolumn{1}{l}{\textit{$\underline{1-\epsilon = 0.95}$}:} & & (88.50) & & & (85.17) & & & (89.02) & \\[1.2mm]
Random & 97.23 & 91.56 & 21.57 & 96.91 & 87.42 & 22.71 & 97.11 & 91.58 & 21.60 \\
$D_{\mathrm{KL}}(p_{k-1}||p_k)$ & 97.36 & 92.49 & 19.33 & 96.84 & 88.85 & 22.28 & 97.08 & 92.46 & 20.18 \\
$\mathcal{H}(p_k)$  & 97.28 & 92.84 & 12.49 & 96.79 & 88.28 & 17.44 & 97.15 & 92.79 & 14.55 \\
$p_k^{\mathrm{diff}}$  & 97.28 & 92.84 & 12.49 & 96.83 & 88.38 & 17.42 & 96.96 & 92.80 & 12.89 \\
$p_k^{\mathrm{max}}$ (SM)    & 97.28 & 92.84 & 12.49 & 96.79 & 88.31 & 17.40 & 97.08 & 92.81 & 13.23 \\
Meta & 96.99 & 92.24 & 10.75 & 96.91 & 88.29 & 16.49 & 96.98 & 91.98 & 10.60 \\

\midrule

\multicolumn{1}{l}{\textit{$\underline{1-\epsilon = 0.90}$}:} & & (83.84) & & & (80.69) & & & (84.33) &  \\[1.2mm]
Random & 94.52 & 89.68 & 19.21 & 93.94 & 85.44 & 21.47 & 94.27 & 89.28 & 19.01 \\
$D_{\mathrm{KL}}(p_{k-1}||p_k)$ & 94.48 & 91.36 & 12.13 & 93.76 & 86.81 & 20.49 & 93.88 & 89.98 & 14.59 \\
$\mathcal{H}(p_k)$ & 94.49 & 91.31 & 9.91 & 93.67 & 86.41 & 16.29 & 94.54 & 90.80 & 13.08 \\
$p_k^{\mathrm{diff}}$ & 94.49 & 91.31 & 9.91 & 93.67 & 86.53 & 16.11 & 94.02 & 90.56 & 10.69 \\
$p_k^{\mathrm{max}}$ (SM) & 94.49 & 91.31 & 9.91 & 93.68 & 86.44 & 16.13 & 94.05 & 90.76 & 11.01 \\
Meta & 94.40 & 90.45 & 8.80 & 93.74 & 86.17 & 15.09 & 94.08 & 89.72 & 8.88 \\

\bottomrule                               
\end{tabular}

\caption{Results (dev) of our Shared model on the classification tasks using different nonconformity measures. $p_k^{\mathrm{diff}}$ and $p_k^{\mathrm{max}}$ are defined in Table~\ref{tab:features}, $D_{\mathrm{KL}}(p_{k-1}||p_k)$ is the Kullback-Leibler Divergence between the previous layer's softmax outputs and the current layer, and $\mathcal{H}(p_k)$ is the entropy of the softmax outputs. Our CP-based Shared method provides the guaranteed consistency with any measure, even random. The benefit, however, of using a better measure is in confidently exiting earlier. Our Meta measure allows the use of least Transformer layers meeting the consistency requirement with enough confidence.}
\label{tab:ablation_noncomf}
\end{table*}

\subsection{Naive development set calibration}

For completeness, we evaluate the simple, but naive, calibration method described in \S\ref{sec:naive}. Recall that in this approach we first tune $\bm{\tau}$ on a development set, and then bound the resulting $\mathcal{G}$'s accuracy using another heldout calibration split. The bound we get is static; we are not able to guarantee that it will satisfy our performance constraint in Eq.~\eqref{eq:marginalcoverage}. 

Table~\ref{tab:test_bound} gives results for our models when using either the Meta or SM confidence measures (which we threshold with $\bm{\tau}$). We use half of $\mathcal{D}_{\mathrm{cal}}$ to find the minimal threshold that provides $\epsilon$-consistency. Then, we evaluate the threshold on the second half of $\mathcal{D}_{\mathrm{cal}}$ to get the empirical error. We compute the test set bound on this error with a confidence of $\delta = 10^{-2}$. As expected, the lower bound we compute is often significantly below $1 - \epsilon$, as it reflects the uncertainty that our measured consistency is accurate. Often the measured empirical consistency is also slightly below $1 - \epsilon$. At a high level, the overall consistency vs.\ efficiency trade-off is otherwise broadly similar to the one obtained by the Shared CP calibration.

\subsection{Nonconformity measure comparison} \label{app:nocomf_measures}
The test statistic used for a conformal prediction is typically called a nonconformity measure (i.e., in our work this is $\mathcal{M}_k(x)$). We experiment with different nonconformity measures as drop-in replacements for $\mathcal{M}_k(x)$, and report the results in Table~\ref{tab:ablation_noncomf}. The conformal calibration guarantees validity with any measure, even a random one, as long as they retain exchangeability. Good measures are ones that are statistically efficient, and will minimize the number of layers required for prediction at the required confidence level. This is a result of smaller $\cset$ sets, that tightly cover the inconsistent layers (and hence are more judicious with the complement, $\cset^c$). To be consistent with previous work where softmax metrics are used~\citep[such as][]{schwartz-etal-2020-right}, we use $p_k^{\mathrm{max}}$ as our non-Meta baseline in the main paper. In some settings, however, $p_k^{\mathrm{diff}}$ performs slightly better.

% %%%%%%

% \subsection{Test set bound method} \label{sec:testbound}
% As an alternative to conformal calibration, one can compute a binomial tail bound~\citep{langford2005} to obtain a lower bound on the consistency of $\mathcal{G}$. The two methods, though, differ both in their guarantees and in functionality. The test set bound is conditioned on the calibration data, whereas CP's guarantee is marginal over samples of the calibration set. Also, CP readily works in an online setting. In terms of usage, unlike CP, with the test set bound, the user cannot aim for a specific lower bound (e.g., $\epsilon$) but has to derive the bound post hoc. A workaround is to search for the desired bound \citep{geifman2017selective}.

% For completeness, we evaluate the test set bound on our models using the Meta and SM as confidence measures. We use half of $\mathcal{D}_{\mathrm{cal}}$ to find the minimal threshold that provides $\epsilon$-consistency. Then, we evaluate the threshold on the second half of $\mathcal{D}_{\mathrm{cal}}$ to get the empirical error. We compute the test set bound on this error with a confidence of $10^{-2}$.

% The results on the classification development sets are summarized in Table~\ref{tab:test_bound}. As expected, the achieved lower bound on the consistency is below $1 - \epsilon$ as it is computed after the threshold tuning. The overall consistency vs.\ efficiency trade-off is broadly similar to the one obtained by the Shared CP calibration.

% %%%%%%

\begin{figure*}[!t]
\small
\centering
\footnotesize
\begin{subfigure}{\textwidth}
\includegraphics[width=1\linewidth]{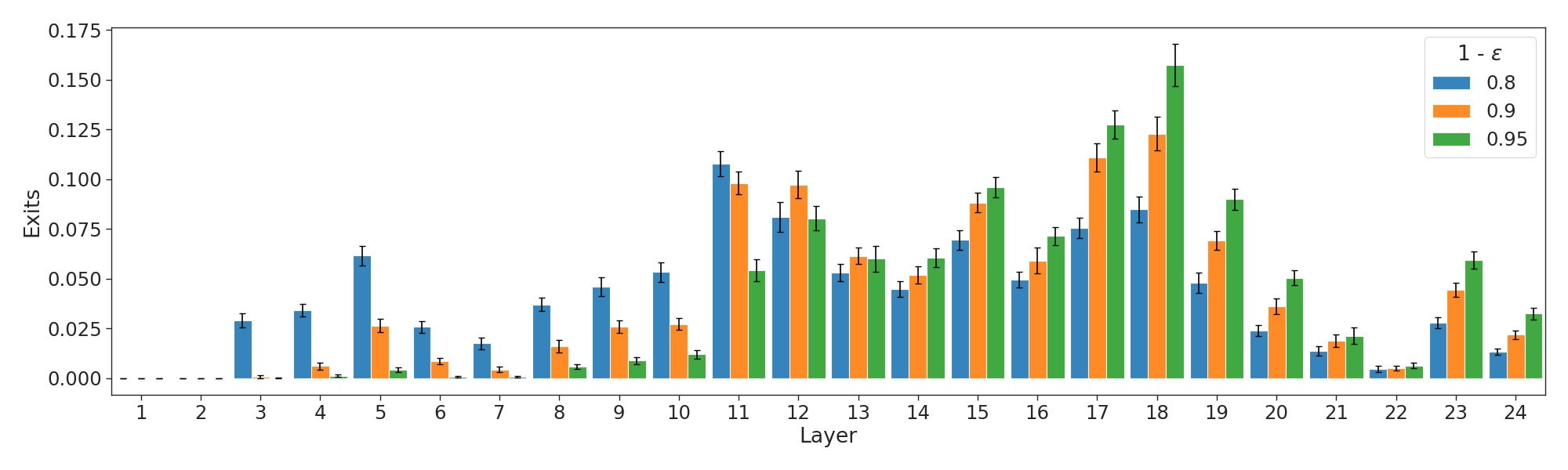} 
\caption{VitaminC}
\end{subfigure}

% \vspace*{-1.4\baselineskip}
\begin{subfigure}{\textwidth}
\includegraphics[width=1\linewidth]{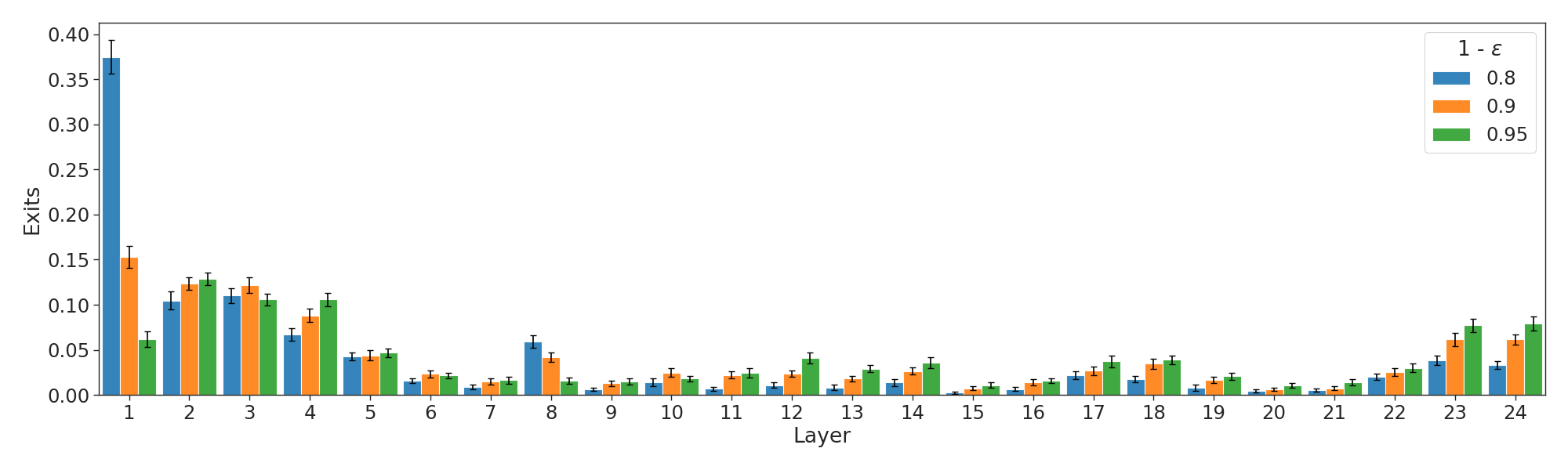}  
\caption{AG News}
% \vspace*{-1.8\baselineskip}
\end{subfigure}

\begin{subfigure}{\textwidth}
\includegraphics[width=1\linewidth]{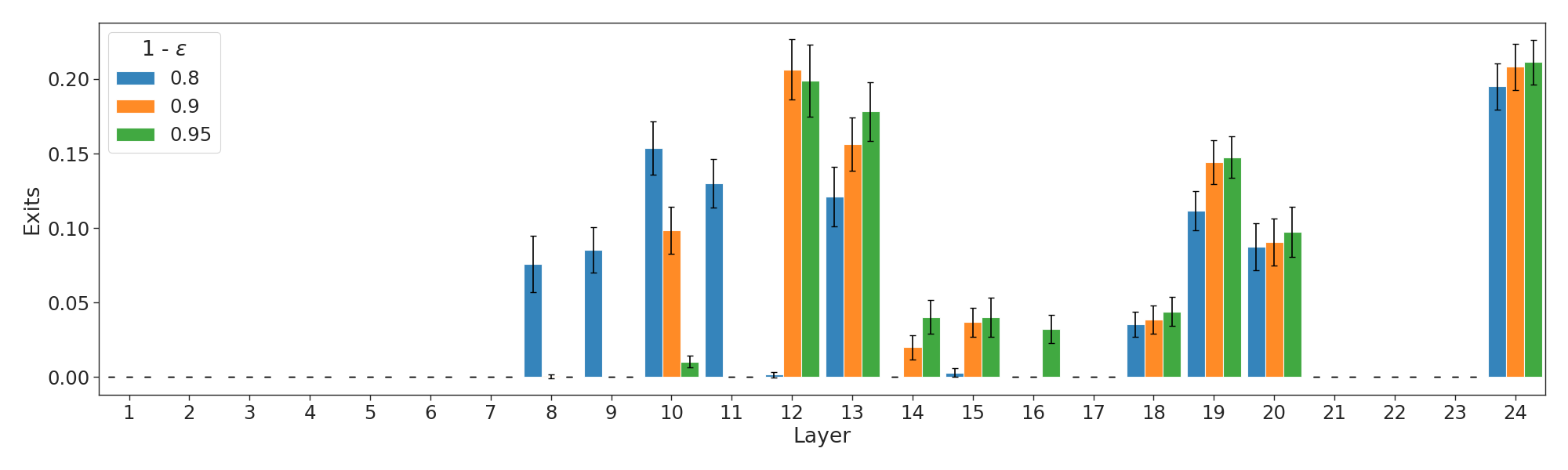}  
\caption{STS-B}
% \vspace*{-1.8\baselineskip}
\end{subfigure}

% \vspace*{-0.5\baselineskip}
\caption{Distribution of exit layers per tolerance level $\epsilon$ (dev sets) with our Shared/ Meta Albert-xlarge model.\\ See Figure~\ref{fig:imdb_hist} for IMDB.}
% \vspace{-10pt}
\label{fig:exit_dist_additional}
\end{figure*}

\subsection{Exit layer statistics}
Figure~\ref{fig:exit_dist_additional} depicts the distribution of exit layers for the different tasks with three reference tolerance levels. Reducing $\epsilon$ requires greater confidence before exiting, resulting in later exits on average. We provide example inputs with their respective exit layer in Appendix~\ref{app:example_outputs}.

\begin{table*}[!t]
\centering
\small
\vspace{-3pt}

\begin{tabular}{l|dddd|dddd}
\toprule
\multicolumn{1}{c|}{Method} &
  \multicolumn{4}{c|}{\textbf{Amortized time ($100\cdot T_{\mathcal{G}} / T_{\mathcal{F}}$})} &
  \multicolumn{4}{c}{\textbf{MACs reduction} ($|\mathcal{F}| / |\mathcal{G}|$)} \\
  \multicolumn{1}{c|}{} & 
  \multicolumn{1}{c}{IMDB} &
  \multicolumn{1}{c}{VitaminC} &
  \multicolumn{1}{c}{AG News} &
  \multicolumn{1}{c|}{STS-B} &
  \multicolumn{1}{c}{IMDB} &
  \multicolumn{1}{c}{VitaminC} &
  \multicolumn{1}{c}{AG News} &
  \multicolumn{1}{c}{STS-B} \\
\midrule
% \multicolumn{1}{l}{\textit{$\underline{1-\epsilon = 0.95}$}:} &  \\[1.2mm]
\rowcolor{Gray}
Thres./ SM & 
85.56   &	102.12  &  	112.52  &	\multicolumn{1}{c|}{\text{N/A}}       &   1.45    &	1.20    &	1.08    &	\multicolumn{1}{r}{\text{N/A}}   \\
\rowcolor{Gray}
Thres./ Meta & 
99 .85   &	109.93  &  	91.95   &	107.44   &	1.35    &	1.22    &	1.48    &	1.25    \\ 
Indep./ Meta &
89.25   &	109.57  &  	114.66  &	130.36  &	1.53    &	1.22    &	1.17    &	1.02    \\
Shared/ SM &
67.22   &	\textbf{90}.\textbf{41}   &  	69.99   &	\multicolumn{1}{c|}{\text{N/A}}      &	1.90    &	1.37    &	1.81    &	\multicolumn{1}{r}{\text{N/A}}       \\
Shared/ Meta &
\textbf{63} .\textbf{99} \ \%   &	94.97 \ \%   &  	\textbf{60}.\textbf{56} \ \%   &	\textbf{99}.\textbf{38} \ \%   &	\times \textbf{2}.\textbf{22}    &	\times \textbf{1}.\textbf{43}    &	\times \textbf{2}.\textbf{36}    &	\times \textbf{1}.\textbf{36}   \\

% \midrule
% \multicolumn{1}{l}{\textit{$\underline{1-\epsilon = 0.90}$}:} &  \\[1.2mm]
% \rowcolor{Gray}
% Thres./ SM & 
% 76.91   &	96.66   &	99.58   &	\multicolumn{1}{c|}{\text{N/A}}       &   1.63    &	1.27    &	1.23    &	\multicolumn{1}{r}{\text{N/A}}   \\
% \rowcolor{Gray}
% Thres./ Meta & 
% 87.22   &	103.59  &	77.87   &	104.01  &	1.57    &	1.30    &	1.78    &	1.30    \\
% Indep./ Meta &
% 84.88   &	103.85  &	99.44   &	113.00  &	1.62    &	1.30    &	1.36    &	1.18    \\
% Shared/ SM &
% 56.16   &	\textbf{84}.\textbf{86}   &	58.47   &	\multicolumn{1}{c|}{\text{N/A}}       &	2.33    &	1.46    &	2.22    &	\multicolumn{1}{r}{\text{N/A}}   \\
% Shared/ Meta &
% \textbf{54}.\textbf{53} \ \%   &	87.38  \ \%  &	\textbf{51}.\textbf{10} \ \%   &	\textbf{97}.\textbf{56} \ \%   &	\times \textbf{2}.\textbf{66}    &	\times \textbf{1}.\textbf{57}    &	\times \textbf{2}.\textbf{87}    &	\times \textbf{1}.\textbf{39}    \\

\bottomrule                               
\end{tabular}
\caption{Complementary results for Table~\ref{tab:speedup_results} with $1-\epsilon=0.95$.}

\vspace{-10pt}
\label{tab:speedup_results_app}
\end{table*}
\subsection{Albert-base results}\label{app:base_res}

\begin{figure*}[!t]
\small
\centering
\footnotesize
\begin{subfigure}{0.31\textwidth}
\includegraphics[width=1.05\linewidth]{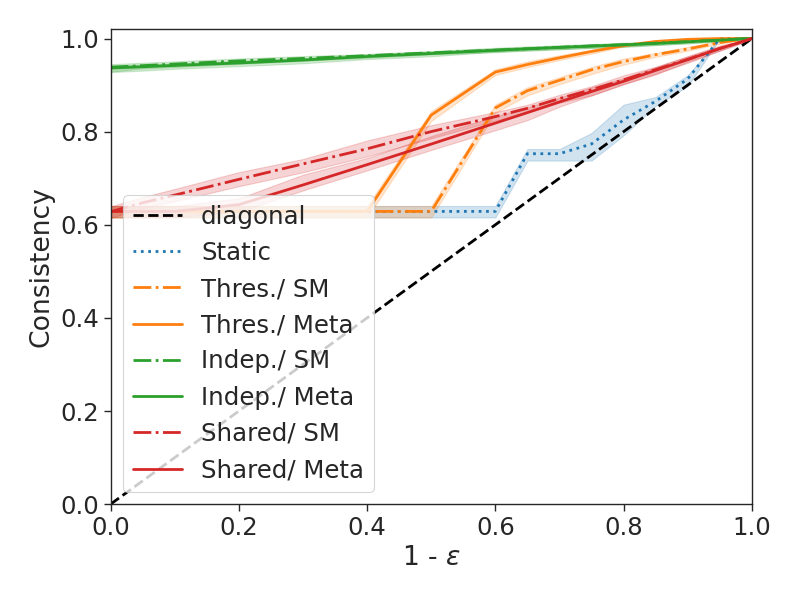} 
\end{subfigure}
%  \hfill
~
\begin{subfigure}{0.31\textwidth}
\includegraphics[width=1.05\linewidth]{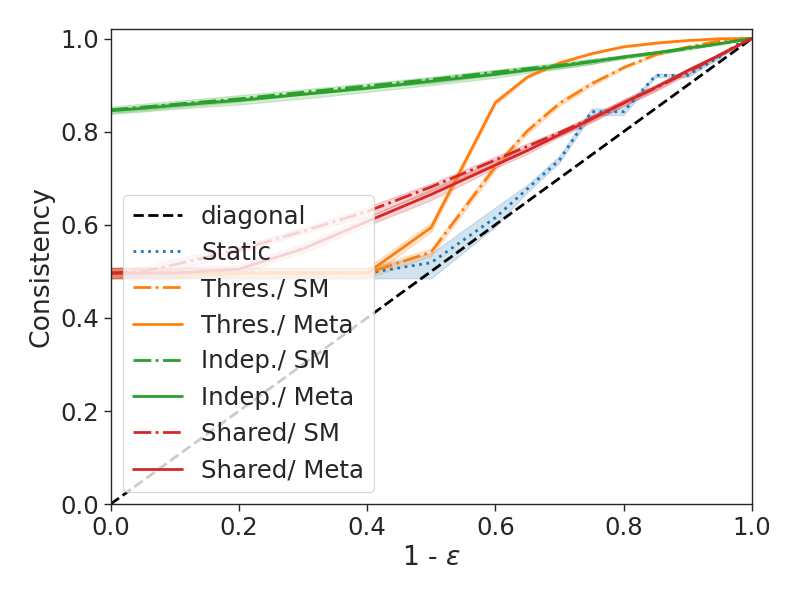}
\end{subfigure}
~
\begin{subfigure}{0.31\textwidth}
\includegraphics[width=1.05\linewidth]{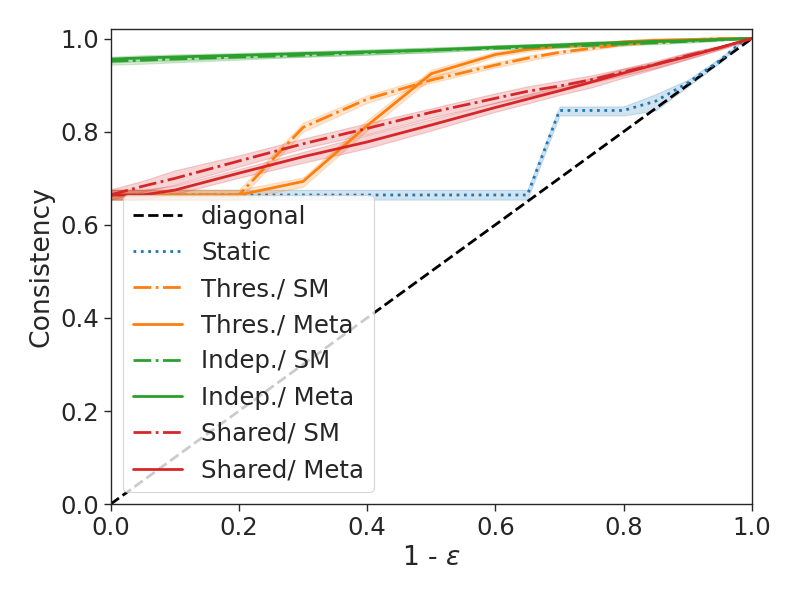} 
\end{subfigure}

\vspace*{-1.2\baselineskip}
\begin{subfigure}{0.31\textwidth}
\includegraphics[width=1.05\linewidth]{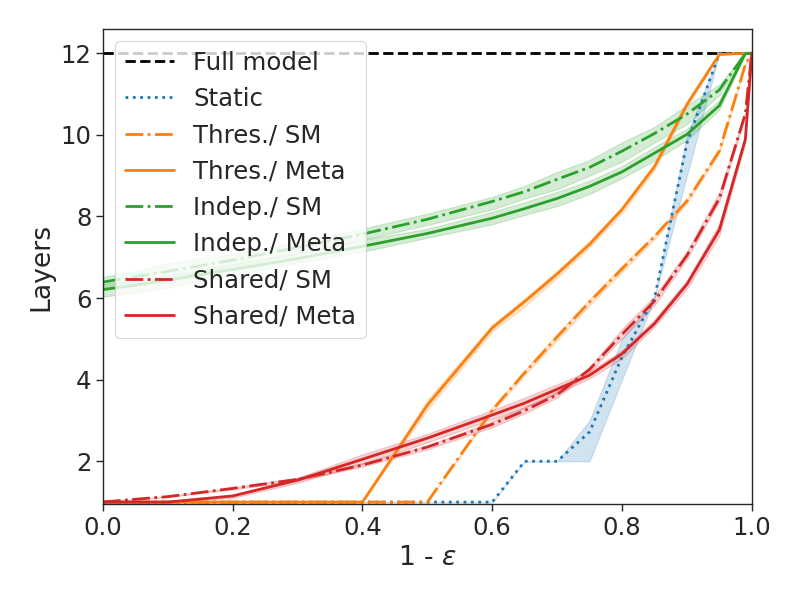} 
% \vspace*{-1.8\baselineskip}
\caption{IMDB}
\end{subfigure}
~
\begin{subfigure}{0.31\textwidth}
\includegraphics[width=1.05\linewidth]{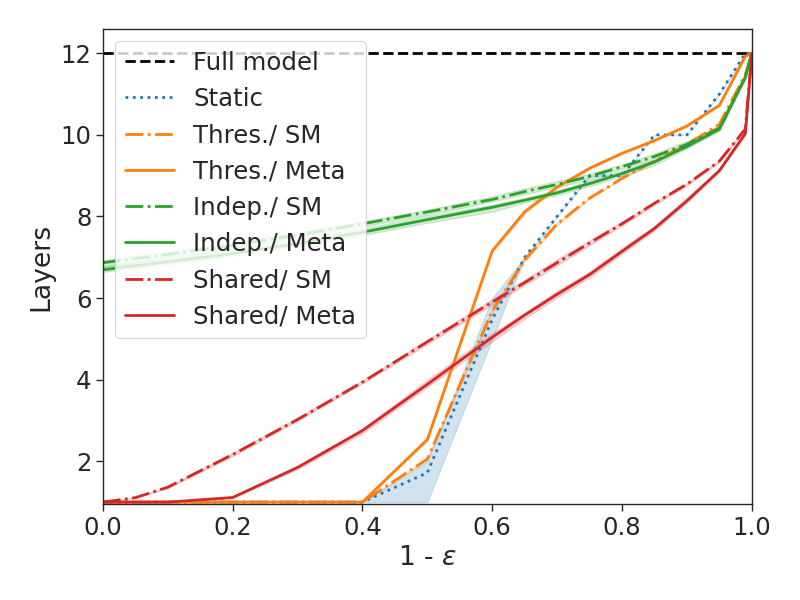}
% \vspace*{-1.8\baselineskip}
\caption{VitaminC}
\end{subfigure}
~
\begin{subfigure}{0.31\textwidth}
\includegraphics[width=1.05\linewidth]{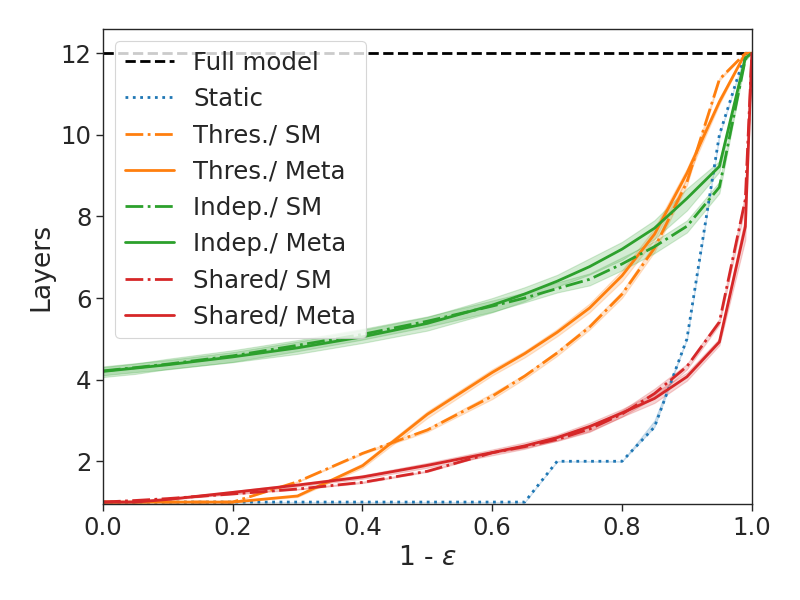} 
% \vspace*{-1.8\baselineskip}
\caption{AG News}
\end{subfigure}
% \vspace*{-0.5\baselineskip}

\begin{subfigure}{0.31\textwidth}
\includegraphics[width=1.05\linewidth]{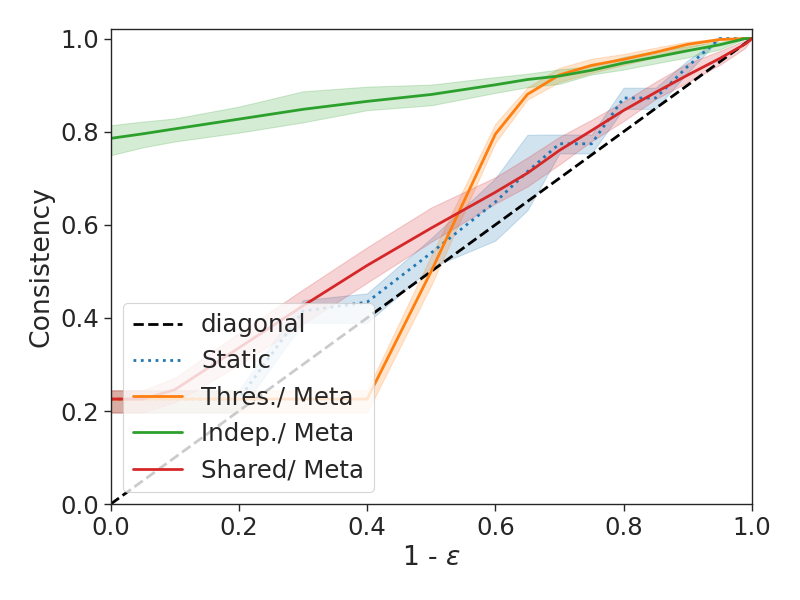} 
\hfill
\includegraphics[width=1.05\linewidth]{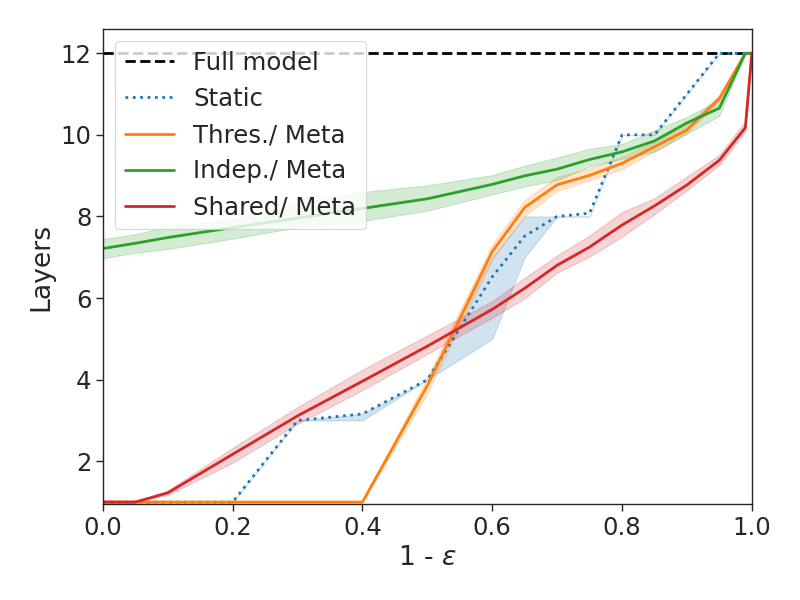} 
\caption{STS-B}
% \vspace*{-1.8\baselineskip}
\end{subfigure}

\caption{Development set results with an Albert-base 12-layers model as $\mathcal{F}$.}

% \vspace{-10pt}
\label{fig:cp_res_base}
\end{figure*}

% \begin{figure}[!t]
% \small
% \centering
% \footnotesize
% \begin{subfigure}{0.4\textwidth}
% \includegraphics[width=1.0\linewidth]{figures/base/stsb/const.png} 
% \end{subfigure}
% \vspace*{-1.8\baselineskip}
% \begin{subfigure}{0.4\textwidth}
% \includegraphics[width=1.0\linewidth]{figures/base/stsb/size.png} 
% % \vspace*{-1.8\baselineskip}
% \end{subfigure}
% % \vspace*{-0.5\baselineskip}
% \caption{Regression results for STSB evaluation set with an Albert-base 12-layers model.}
% % \vspace{-10pt}
% \label{fig:cp_reg_res_base}
% \end{figure}

Figure~\ref{fig:cp_res_base} reports the classification and regression results with an Albert-base 12-layers model. The trends are similar to the larger 24-layers version. Again, we see the efficacy of our Shared conformal calibration and the Meta nonconformity scores. For example, the AG News CAT Shared/ Meta model can preserve 95\% consistency while using less than 5 Transformer layers on average.

\subsection{RoBERTa-large results}\label{app:roberta_res}

\begin{figure*}[!t]
\small
\centering
\footnotesize
\begin{subfigure}{0.31\textwidth}
\includegraphics[width=1.05\linewidth]{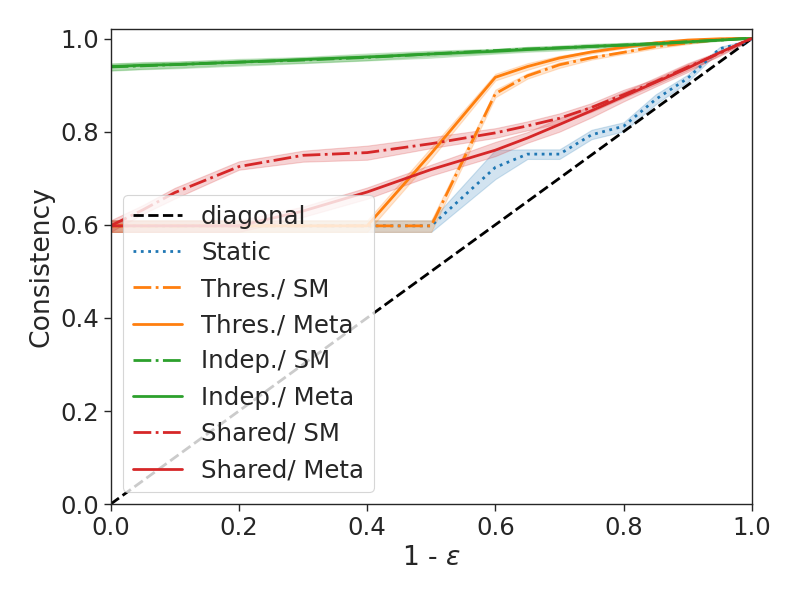} 
\end{subfigure}
%  \hfill
~
\begin{subfigure}{0.31\textwidth}
\includegraphics[width=1.05\linewidth]{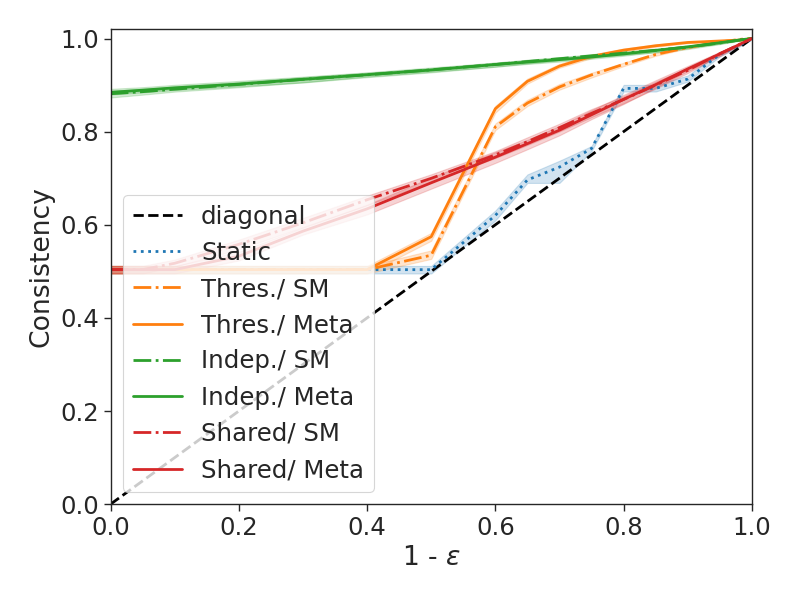}
\end{subfigure}
~
\begin{subfigure}{0.31\textwidth}
\includegraphics[width=1.05\linewidth]{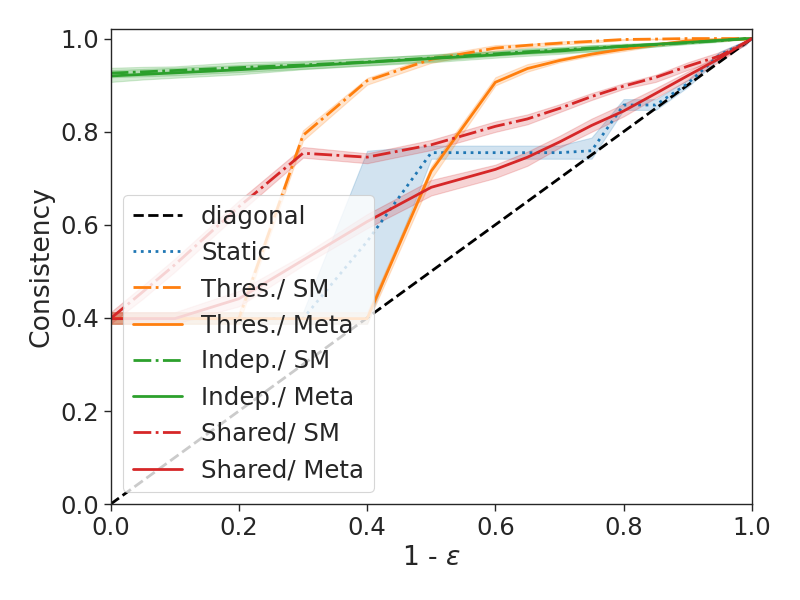} 
\end{subfigure}

\vspace*{-1.2\baselineskip}
\begin{subfigure}{0.31\textwidth}
\includegraphics[width=1.05\linewidth]{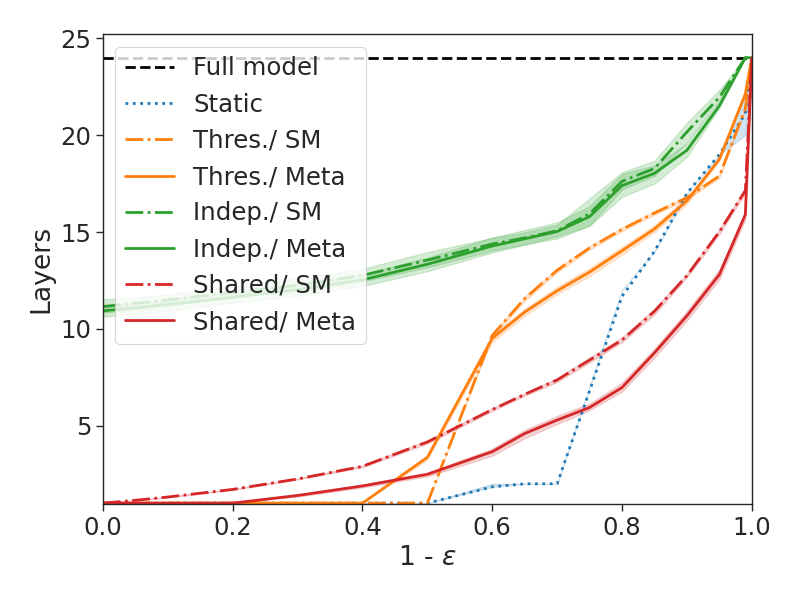} 
% \vspace*{-1.8\baselineskip}
\caption{IMDB}
\end{subfigure}
~
\begin{subfigure}{0.31\textwidth}
\includegraphics[width=1.05\linewidth]{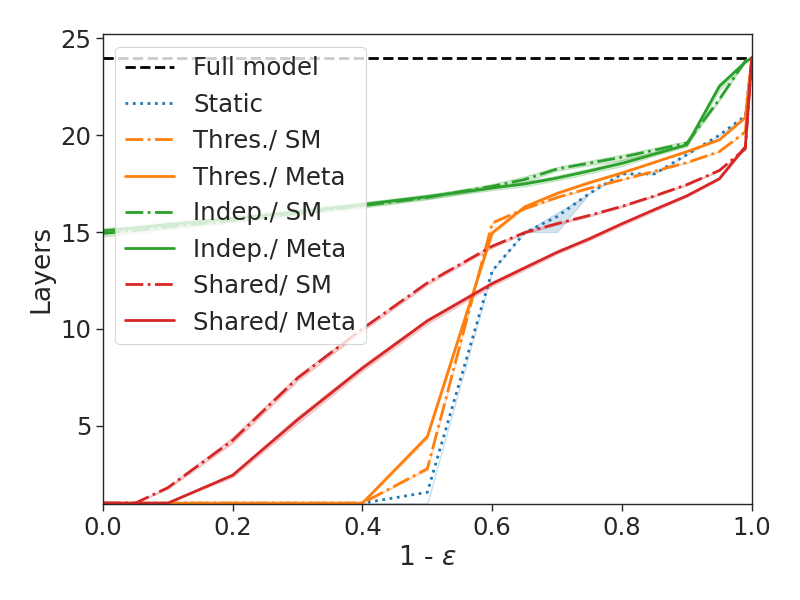}
% \vspace*{-1.8\baselineskip}
\caption{VitaminC}
\end{subfigure}
~
\begin{subfigure}{0.31\textwidth}
\includegraphics[width=1.05\linewidth]{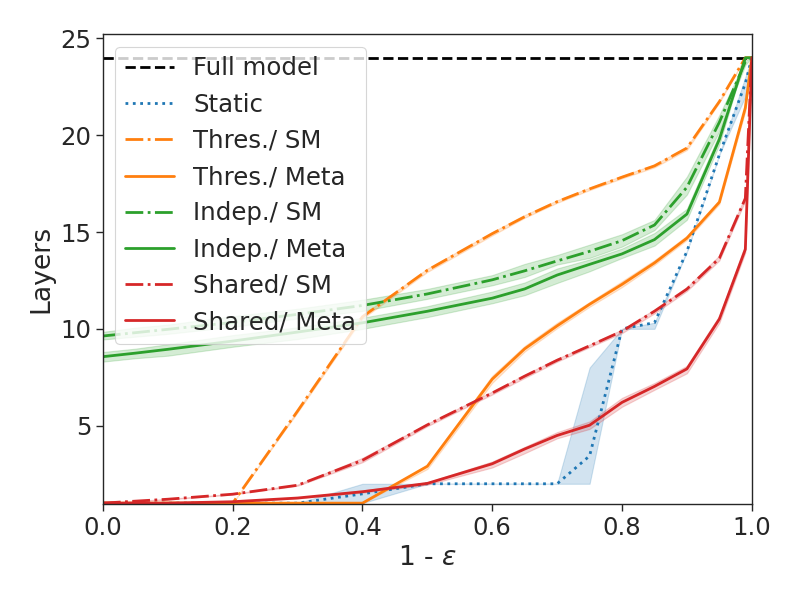} 
% \vspace*{-1.8\baselineskip}
\caption{AG News}
\end{subfigure}
% \vspace*{-0.5\baselineskip}

\begin{subfigure}{0.31\textwidth}
\includegraphics[width=1.05\linewidth]{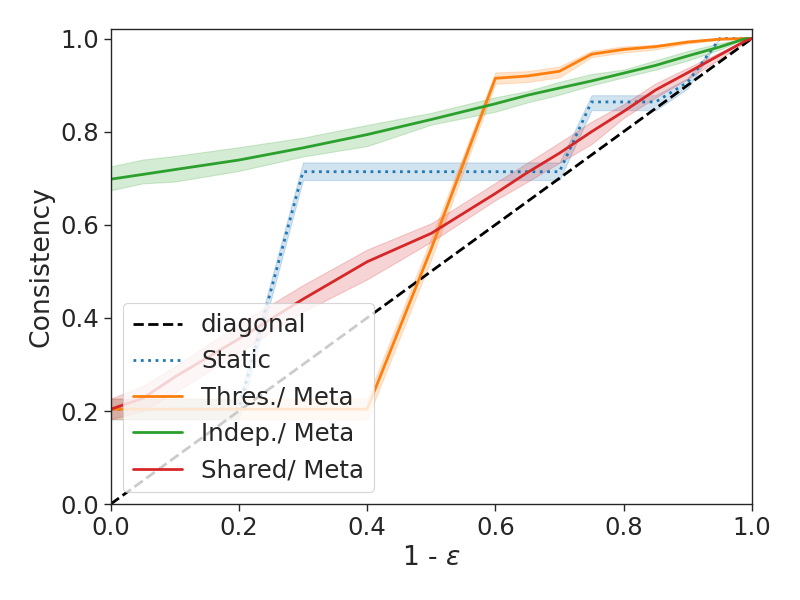} 
\hfill
\includegraphics[width=1.05\linewidth]{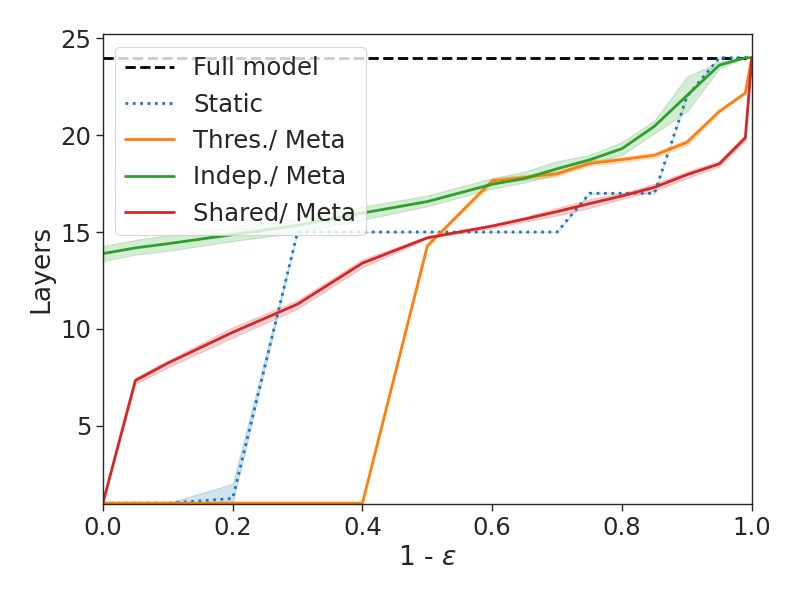} 
\caption{STS-B}
% \vspace*{-1.8\baselineskip}
\end{subfigure}

\caption{Development set results with an RoBERTa-large 24-layers model as $\mathcal{F}$.}

% \vspace{-10pt}
\label{fig:cp_res_roberta}
\end{figure*}

Figure~\ref{fig:cp_res_roberta} shows the results of our methods on top of the RoBERTa-large 24-layers Transformer. One main difference between RoBERTa and Albert, is that Albert shares the same parameters across all layers, essentially applying the same function recursively, whereas RoBERTa learns different parameters per layer. Yet, our method is agnostic to such differences and, as observed in the plots, results in similar trends. The value of our Meta classifier compared to the softmax response is even greater with the RoBERTa model.
\section{Example Predictions}\label{app:example_outputs}

Table~\ref{tab:examples} reports examples of inputs for different tasks and the number of layers that our Albert-xlarge CAT with $\epsilon=0.1$ required. These examples suggest that ``easier'' inputs (e.g., containing cue phrases or having large overlaps in sentence-pair tasks) might require less layers. In contrast, more complicated inputs (e.g., using less common language or requiring numerical analysis) can lead to additional computational effort until the desired confidence is obtained.

% We evaluate our methods on three classification tasks with varying label space size $|\mathcal{Y}|$, amount of training data, and difficulty: \textbf{IMDB}~\citep{maas-etal-2011-learning} sentiment analysis on movie reviews, \textbf{VitaminC}~\citep{Schuster2021} fact verification with Wikipedia articles, and \textbf{AG}~\citep{ag_news,zhang2015character} news topic classification. We also evaluate on the \textbf{STSB}~\citep{cer-etal-2017-semeval} semantic textual similarity regression task where $\mathcal{Y}\in[0,5]\subset \mathbb{R}$. Dataset statistics, along with the test set performance of our original $\mathcal{F}$ model (Albert-xlarge), are summarized in Table~\ref{tab:datasets}.
\begin{table*}[t]
\centering
\small
  \begin{tabular}{p{0.5cm}|p{1cm}|p{13.2cm}}
\toprule
Exit layer    &  Gold\newline label & Input   \\
\midrule
\multicolumn{1}{l}{} & \multicolumn{2}{l}{\textbf{IMDB} \citep{maas-etal-2011-learning}}  \\
% \midrule
\cmidrule{2-3} 
1 & Pos & Without question, film is a powerful medium, more so now than ever before, due to the accessibility of DVD/video, which gives the filmmaker the added assurance that his story or message is going to be seen by possibly millions of people. [...] \\
\cmidrule{2-3} 
4 & Neg & This movie was obscenely obvious and predictable. The scenes were poorly written and acted even worse.\\
\cmidrule{2-3} 
10 & Pos & I think Gerard's comments on the doc hit the nail on the head. Interesting film, but very long. [...]\\
\cmidrule{2-3} 
15 & Pos & here in Germany it was only shown on TV one time. today, as everything becomes mainstream, it's absolute impossible, to watch a film like this again on the screen. maybe it's the same in USA [...]\\
\cmidrule{2-3} 
20 & Neg & I tried to be patient and open-minded but found myself in a coma-like state. I wish I would have brought my duck and goose feather pillow... [...]\\
\cmidrule{2-3} 
24 & Neg &  Hypothetical situations abound, one-time director Harry Ralston gives us the ultimate post-apocalyptic glimpse with the world dead, left in the streets, in the stores, and throughout the landscape, sans in the middle of a forgotten desert. [...] \\

\midrule

\multicolumn{1}{l}{} & \multicolumn{2}{l}{\textbf{VitaminC} \citep{Schuster2021}}  \\
\cmidrule{2-3} 
3 & Sup &  \underline{Claim:} Another movie titled The SpongeBob Movie: Sponge on the Run is scheduled for release in 2020.\\
&  &  \underline{Evidence:} A second film titled The SpongeBob Movie : Sponge Out of Water was released in 2015, and another titled The SpongeBob Movie: Sponge on the Run is scheduled for release in 2020.\\
\cmidrule{2-3} 
5 & Sup &  \underline{Claim:} Julie Bishop offered a defence of her nation's intelligence cooperation with America.\\
&  &  \underline{Evidence:} The Australian Foreign Minister Julie Bishop stated that the acts of Edward Snowden were treachery and offered a staunch defence of her nation's intelligence co-operation with America.\\
\cmidrule{2-3} 
10 & NEI &  \underline{Claim:} The character Leslie hurts her head on the window in the film 10 Cloverfield Lane.\\
&  & \underline{Evidence:} Michelle realizes Howard was right and returns his keys.\\
\cmidrule{2-3} 
15 & Sup &  \underline{Claim:} Halakha laws are independent of being physically present in the Land of Israel.\\
&  & \underline{Evidence:} The codification efforts that culminated in the Shulchan Aruch divide the law into four sections, including only laws that do not depend on being physically present in the Land of Israel.\\
\cmidrule{2-3} 
20 & Sup &  \underline{Claim:} Germany has recorded less than 74,510 cases of coronavirus , including under 830 deaths.\\
&  & \underline{Evidence:} 74,508 cases have been reported with 821 deaths and approximately 16,100 recoveries.\\
\cmidrule{2-3} 
24 & NEI &  \underline{Claim:} For the 2015-16 school year , the undergraduate fee at USF is under \$43,000.\\
&  & \underline{Evidence:} Undergraduate tuition at USF is \$44,040 for the 2016-17 school year.\\

\midrule

\multicolumn{1}{l}{} & \multicolumn{2}{l}{\textbf{AG News} \citep{ag_news,zhang2015character}}  \\
\cmidrule{2-3} 
1 & Business & Crude Oil Rises on Speculation Cold Weather May Increase Demand Crude oil futures are headed for their biggest weekly gain in 21 months [...] \\
\cmidrule{2-3} 
5 & Sports & NHL Owner Is Criticized for Talking of Replacement Players The day before the regular season was supposed to open [...] \\
% \cmidrule{2-3} 
% 10 & World & North Korea Says the Tyrant is Bush, not Kim North Korea says it sees no reason to join a working-level meeting with the United States [...] \\
\cmidrule{2-3} 
15 & World & Scotch Whisky eyes Asian and Eastern European markets (AFP) AFP - A favourite tipple among connoisseurs the world over, whisky is treated with almost religious reverence on the Hebridean [...] \\
\cmidrule{2-3} 
20 & Business &  Arthritis drug withdrawn after trial A prescription painkiller used by more than 250,000 Australians to treat arthritis has been withdrawn from sale after a clinical trial found it doubled the risk [...] \\
\cmidrule{2-3} 
24 & Sci/Tech &  Airbus drops out of Microsoft appeal Aircraft builder withdraws its request to intervene in Microsoft's antitrust appeal; Boeing also forgoes intervention. \\

\midrule

\multicolumn{1}{l}{} & \multicolumn{2}{l}{\textbf{STS-B} \citep{cer-etal-2017-semeval}}  \\
\cmidrule{2-3} 
10 & 0.6 &  \underline{Sent. 1:} A child wearing blue and white shorts is jumping in the surf.\\
&  &  \underline{Sent. 2:} A girl wearing green twists something in her hands.\\
\cmidrule{2-3} 
15 & 2.8 &  \underline{Sent. 1:} Saudi Arabia gets a seat at the UN Security Council\\
&  &  \underline{Sent. 2:} Saudi Arabia rejects seat on UN Security Council\\
\cmidrule{2-3} 
20 & 4.2 &  \underline{Sent. 1:} a small bird sitting on a branch in winter.\\
&  &  \underline{Sent. 2:} A small bird perched on an icy branch.\\
\cmidrule{2-3} 
24 & 3.0 &  \underline{Sent. 1:} It depends entirely on your company and your contract.\\
&  &  \underline{Sent. 2:} It depends on your company.\\

\bottomrule
\end{tabular}
% \vspace*{-10pt}
\caption{Number of Transformer layers used for example inputs from the task's test sets with our Shared/Meta CAT with a tolerance level of $\epsilon=0.1$} \label{tab:examples}
\end{table*}

% \section{Supplemental Material}
% \label{sec:supplemental}

\end{document}